\theoremstyle{plain}
\theoremstyle{definition}
\theoremstyle{remark}
\definecolor{tablegray}{rgb}{0.9, 0.9, 0.9}
\definecolor{colfirst}{HTML}{029e72}
\definecolor{colsecond}{HTML}{de8f03}
\newcommand{\rebuttal}[1]{%
    {#1}%
}
\icmltitlerunning{Task-Awareness Improves LLM Generations and Uncertainty}
\begin{document}

\twocolumn[
\icmltitle{Task-Awareness Improves LLM Generations and Uncertainty}

\icmlsetsymbol{equal}{*}
\begin{icmlauthorlist}
\icmlauthor{Tim Tomov}{equal,TUM,MDSI,MCML}
\icmlauthor{Dominik Fuchsgruber}{equal,TUM,MDSI}
\icmlauthor{Stephan Günnemann}{TUM,MDSI,MCML}
\end{icmlauthorlist}

\icmlaffiliation{TUM}{School of Computation, Information \& Technology, Technical University of Munich}
\icmlaffiliation{MDSI}{Munich Data Science Institute}
\icmlaffiliation{MCML}{Munich Center for Machine Learning}

\icmlcorrespondingauthor{Tim Tomov}{tim.tomov@tum.de}
\icmlcorrespondingauthor{Dominik Fuchsgruber}{d.fuchsgruber@tum.de}

\icmlkeywords{Machine Learning, ICML}

\vskip 0.3in
]

\printAffiliationsAndNotice{}  %

\begin{abstract}
In many applications of LLMs, natural language responses often have an underlying structure such as representing discrete labels, numerical values, or graphs. Yet, existing decoding and uncertainty estimation methods operate only in language space and largely disregard structural information. We address this by modeling LLM outputs directly in a task-dependent latent structure. By equipping this structure with a dissimilarity measure, we can compute Bayes-optimal responses. These are not selected from sampled generations but are newly synthesized by combining individual responses in the latent space.
Across different tasks, Bayes-optimal responses consistently outperform standard decoding methods like beam search. Moreover, quantifying uncertainty via the induced Bayesian risk captures variations in terms of the latent structure and improves alignment with output quality and correctness. Our decision-theoretic framework is applicable to any problem that admits a latent response structure and enables reliable task-aware LLM predictions.

\end{abstract}

\section{Introduction}

\begin{figure*}[t]
  \centering
  \includegraphics[width=.95\linewidth]{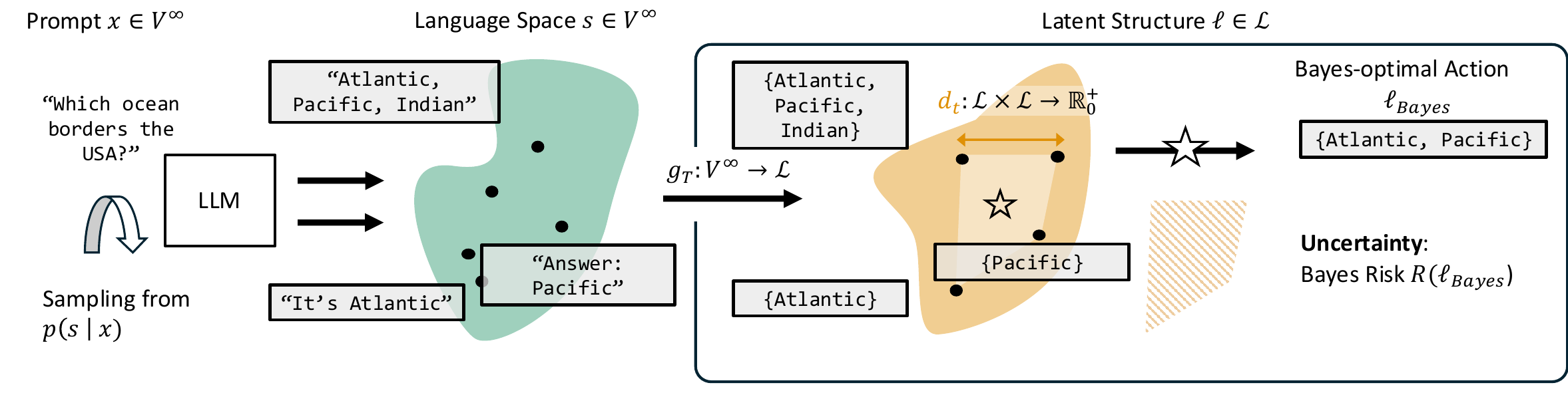}
  \caption{Framework of embedding LLM responses $s \mid x$ into a task-dependent latent space $\mathcal{L}$ on the example of set-based multi-answer question answering. The Bayes-optimal $\ell_\text{Bayes}$ response is the centroid in the latent space w.r.t. a distance metric $d_T$. It does not need to be generated by the LLM directly. Uncertainty is quantified as Bayesian risk that measures the spread in $p(\ell \mid x)$ w.r.t. to $d_T$.}
  \label{fig:fig1}
\end{figure*}

Large language models (LLMs) have emerged as highly general-purpose systems, successfully generalizing across tasks ranging from factual question answering \cite{kuhn2023semanticuncertaintylinguisticinvariances,duan2024shiftingattentionrelevancepredictive} to machine translation \cite{vilar2023prompting}, and text summarization \cite{zhang2024comprehensive}, among many others \cite{raiaan2024review,hadi2023survey}. When prompting LLMs, the semantics of the response are unknown a priori, but the domain of the expected output is often implicitly given. For example, when asking ``What is the capital of Namibia?'', the output should be a city. Similarly, when asking an LLM to rate an essay on a scale from 0 to 10, one expects the output to be a number within this range. While LLMs are trained to generate responses in natural language, the task itself is often associated with a clear task-specific \emph{latent structure}.

Compared to free-form natural language, these latent structures are typically better suited for interpreting and comparing LLM predictions in downstream settings. In many automated workflows, decisions depend on the task-level semantics rather than the linguistic properties. Consequently, LLM outputs are commonly post-processed to recover a task-specific representation such as a numeric score or categorical label \cite{mekala2023zerotop,zhong2024logparser,lukasik2024regressionawareinferencellms}.
In short: \emph{The latent representations of LLM responses are often more useful than the text itself}.

We leverage this insight to improve the generation quality of LLMs and the corresponding estimates of their uncertainty. To that end, we introduce a highly general framework which can be used for any task that admits a latent response structure.
Defining a mapping from natural language to the task-specific latent representation induces an LLM's predictive distribution in the latent space. Utilizing a task-dependent notion of similarity in the latent structure, we perform Minimum Bayes Risk (MBR) decoding \cite{kumar2004minimum} to obtain a \emph{Bayes-optimal} response \emph{in the latent space}. This response needs not to correspond to any of sampled generations and can instead be synthesized by aggregating information across the model’s full output distribution. This is in general not possible in the domain of raw language responses.
At the same time, variability in the latent space with respect to the task-specific dissimilarity yields an uncertainty estimate that directly reflects the notion of error relevant to the task, naturally quantified as \emph{Bayes risk}.

We demonstrate the effectiveness of our framework over a broad range of tasks: Single-answer and multi-answer question answering, text summarization, and machine translation. Across all settings, Bayes-optimal latent responses consistently outperform other decoding schemes, such as beam search or self-consistency \cite{wang2023selfconsistencyimproveschainthought}. At the same time, the Bayes risk provides uncertainty estimates that are more informative than existing approaches in terms of the correctness and quality of the corresponding responses.

\section{Latent Structures Enable Better LLM Generations and Uncertainty Quantification}

LLMs implicitly define a probability distribution over arbitrary-length strings $s \in \mathcal{V}^{\infty}$ from a vocabulary $\mathcal{V}$, which we denote by $p(s)$. Conditioning on an input prompt $x \in \mathcal{V}^{\infty}$ yields the predictive distribution $p(s \mid x)$. In downstream applications, these responses inherently admit a task-specific representation beyond \emph{free-form language}. %
For example, problems like question answering (QA) restrict the LLM's outputs to a discrete support set. In rating problems answers are embedded as reals or ordinals. Treating LLM responses as latents in some structure also has the benefit of enabling further application-specific processing.

Formally, let $T$ be a task with an associated latent space $\mathcal{L}$. We define an encoding function $g_T : \mathcal{V}^{\infty} \to \mathcal{L}$ that projects any generated string \(s\) onto its task-dependent latent representation \(\ell\) (e.g., a real number, a categorical class label). \Cref{fig:fig1} shows an example from multi-answer QA where $g_T$ maps raw strings to a set of possible answers. The mapping $g_T$ then induces a push-forward distribution of the LLM's responses over latent responses $\ell$
\begin{equation}
p(\ell \mid x)
\;=\;
\sum_{s \in \mathcal{V}^{\infty}} p(s \mid x)\, \mathbf{1}\!\left[g_T(s) = \ell\right].
\end{equation}
To compare the generations $s$ in the latent space $\mathcal{L}$, we equip $\mathcal{L}$ with an appropriate dissimilarity measure. In principle, any measure $d : \mathcal{L} \times \mathcal{L} \to \mathbb{R}_{\ge 0}$ is possible, but instantiating $d$ with a proper metric has desirable properties regarding uncertainty estimation as we will detail in \Cref{sec:latent_uq}. Any particular choice of $d$ encodes a task-specific notion of error, for example, the Hamming distance, F1 score, or $0$-$1$ loss.

\subsection{Structure-Aware Minimum Bayes Risk Decoding}\label{sec:decoding}

We directly leverage this structure-dependent representation to synthesize a latent response that does not directly correspond to any of the LLM's generations. To that end, we utilize the distribution $p(\ell \mid x)$ induced by mapping $g_T$ through Bayesian decision theory.
For any candidate action (LLM response) $\tilde{\ell} \in \mathcal{L}$, the associated \emph{Bayes risk} under the dissimilarity $d_T$ can be defined as
\begin{equation}
R(\tilde{\ell})
\;:=\;
\mathbb{E}_{\ell \sim p(\ell \mid x)}\!\left[d_T(\tilde{\ell}, \ell)\right].
\label{eq:bayes_risk}
\end{equation}
The \emph{Bayes-optimal action} (LLM response) is given by
\begin{equation}
\ell_{\text{Bayes}}
\;:=\;
\arg\min_{\tilde{\ell} \in \mathcal{L}} R(\tilde{\ell}),
\label{eq:bayes_action}
\end{equation}
with the associated \emph{Bayes risk}
\begin{equation}
R_{\text{Bayes}} := R(\ell_{\text{Bayes}}).
\end{equation}
Selecting a response that minimizes this risk is referred to as Minimum Bayes Risk (MBR) decoding \cite{kumar2004minimum}: $\ell_{\text{Bayes}}$ represents the optimal action with respect to minimizing Bayes risk under the response distribution $p$. While MBR has been explored using different dissimilarity functions $d$ \cite{bertsch2023itsmbrwaydown,freitag-etal-2023-epsilon,eikema2025structureconditionalminimumbayesrisk} in the space of raw language, we are the first to apply MBR directly in the latent space $\mathcal{L}$.

A key property is that for many latent structures and dissimilarities the Bayes-optimal action (\Cref{eq:bayes_action}) has a closed-form expression or can be computed efficiently. At the same time, it is infeasible to compute the Bayes-optimal closed-form raw language answer as there is no tractable framework for aggregating over raw language. E.g., we can not compute the intersection of two textual responses $x$, but their latent representations as sets of discrete answers permit such operations at negligible cost. 

\paragraph{Approximating the Distribution over Latent Responses}
In practice, we do not have direct access to \(p(\ell \mid x)\) and must approximate it via $M$ Monte-Carlo samples $\{s^{(i)}\}_{i=1}^M$:
\begin{equation}
    p(\ell \mid  x) \approx \frac{1}{M} \sum_{i=1}^M \mathbf{1}[g_T(s^{(i)}) = \ell].
    \label{eq:mc}
\end{equation}
Operating in language space, existing MBR approaches cannot do better than selecting the LLM generation associated with the lowest risk 
\begin{equation}
    \ell_\text{Sample} = \arg\min_{\ell^{(i)}} R(\ell^{(i)})
    \label{eq:lsample}
\end{equation}

This optimization requires $O(M^2)$ pairwise comparisons between the generated responses. 
For many structures, computing the optimal response $\ell_\text{Bayes}$ only has computational complexity of $O(M)$, compare \Cref{tab:latent_examples}. Importantly, by leveraging the latent structure $\mathcal{L}$, we can compute the optimal response \emph{without the LLM explicitly needing to generate a corresponding free-form text}. As we show in \Cref{sec:experiments_decoding}, this enables the Bayes-optimal response to empirically outperform other decoding approaches and MBR in raw language space $\ell_\text{Sample}$.

\subsection{Uncertainty Quantification Through Bayes Risk}\label{sec:latent_uq}

Moreover, we show how to utilize the task-dependent structure $\mathcal{L}$ to quantify the uncertainty associated with a (latent) response.
As discussed above, the mapping $g_T$ induces a distribution $p(\ell \mid x)$ over the latent space $\mathcal{L}$. 
We define the true uncertainty as the discrepancy between this predictive belief and the (unknown) true distribution $p^*(\ell \mid x)$ over latents.
If $d_T$ is a proper metric, a natural notion of distance between these distributions is the $1$-Wasserstein distance
\[
W_1\!\left(p^*(\ell \mid x),\, p(\ell \mid x)\right)
\;:=\;
\inf_{\pi \in \Pi(p^*, p)}
\; \mathbb{E}_{(\ell,\ell') \sim \pi}\!\left[d(\ell,\ell')\right]
\]
where $\Pi(p^*, p)$ denotes the set of all couplings (joint distributions) with marginals $p^*(\ell \mid x)$ and $p(\ell \mid x)$.

\paragraph{Unambiguous Ground-Truth}

A common assumption in supervised learning and uncertainty that most of the existing estimators in LLMs rely on is that there is no inherent ambiguity in the ground-truth \cite{tomov2025illusioncertaintyuncertaintyquantification}. This lack of inherent, so-called aleatoric uncertainty \cite{hullermeier2021aleatoric} can be understood as the reference LLM response for a given task to be concentrated at a single point in the latent space $\mathcal{L}$.
Formally, the (unknown) reference distribution the LLM is ought to approximate well through $p(\ell \mid x)$ is described by a Dirac
$p^*(\ell \mid x) = \delta(\ell = \ell^*)$
located at the true latent label $\ell^* \in \mathcal{L}$. In this case, the Wasserstein distance admits the closed form
\begin{equation}
W_1\!\left(p^*(\ell \mid x),\, p(\ell \mid x)\right)
\;=\;
\mathbb{E}_{\ell \sim p(\ell \mid x)}\!\left[d(\ell^*, \ell)\right]
\;=\;
R(\ell^*),
\label{eq:riskbound}
\end{equation}
which is precisely the Bayes Risk of the true action (response) \(\ell^*\) under the model's predictive push-forward distribution and the task-dependent loss/dissimilarity $d$.

Since the Bayes action $\ell_{\text{Bayes}}$ minimizes the expected risk under $p(\ell \mid x)$, we obtain the inequality
\begin{align}
\begin{split}
    R_{\text{Bayes}}
&\;=\;
\min_{\tilde{\ell} \in \mathcal{L}} 
\mathbb{E}_{\ell \sim p(\ell \mid x)}\!\left[d(\tilde{\ell}, \ell)\right]
\\&\;\le\;
\mathbb{E}_{\ell \sim p(\ell \mid x)}\!\left[d(\ell^*, \ell)\right]
\;=\;
R(\ell^*).
\end{split}
\end{align}

Thus, the minimum Bayes risk is a \emph{lower bound} on the true (epistemic) risk. Consequently, a large Bayes risk \emph{provably implies} a large true risk\footnote{This property even holds if $d$ is not a proper metric.}. This establishes $R_{\text{Bayes}}$ as a principled, model-internal, and task-dependent surrogate for uncertainty that is aware of the task's structure $\mathcal{L}$.

In \Cref{sec:simplex}, we introduce a framework to equip the latent structure $\mathcal{L}$ itself with a notion of (aleatoric) uncertainty to facilitate uncertainty estimation even if the ground-truth reference response is associated with inherent ambiguity. This way, we recover the assumptions of \Cref{eq:riskbound} for tasks with aleatoric uncertainty.

\section{Examples of Latent Spaces and Metrics}\label{sec:structures}

The framework of defining a task-dependent latent structure can be applied to many downstream applications of LLMs. It can be used in any scenario that admits a latent (possibly metric) space $(d_T, \mathcal{L})$ to embed the LLM's responses in. We instantiate this framework for several common LLM tasks (\Cref{tab:latent_examples} \& \ref{tab:latent_examples_generation}). For each task, we derive the corresponding closed-form Bayes-optimal action $\ell_\text{Bayes}$ and the associated uncertainty \(R_{Bayes}\). We provide all proofs in \Cref{app:proofs}.
\begin{table*}[t]
\centering
\caption{Examples of latent spaces $\mathcal{L}$, task metrics $d$, and the resulting Bayes-optimal (or task-specified) decoding rules.}
\small
\setlength{\tabcolsep}{4pt}
\resizebox{2\columnwidth}{!}{
\begin{tabular}{
    p{0.18\textwidth}
    p{0.27\textwidth}
    p{0.25\textwidth}
    p{0.25\textwidth}
    p{0.18\textwidth}
}
\toprule
\textbf{Latent space $\mathcal{L}$} 
& \textbf{Metric $d(\ell,\ell')$} 
& \textbf{Bayes action $\ell_{\text{Bayes}}$} 
& \makecell{\textbf{Minimum Bayes Risk} \\ $R(l_{bayes})$}
& \textbf{Example task} \\
\midrule

Classes $[K]$ 
&  $\mathbf{1}\{\ell \neq \ell'\}$ 
& $\displaystyle \arg\max_{i \in [K]} \Pr(\ell = i )$ 
& $1 - \max_{i \in [K]} \Pr(\ell = i)$
& Single-label QA / classification \\[4pt]

Sets $\{0,1\}^K$ 
& $\sum_{i=1}^K \mathbf{1}[\ell_i \neq \ell'_i]$ 
& $\displaystyle \ell_i = \mathbf{1}[\Pr(\ell_i = 1 \mid x) \ge \tfrac12]$ 
& $\sum_i \min\{\Pr(\ell_i = 1), \Pr(\ell_i = 0)\}$
& Multi-answer QA \\[4pt]

Graphs $\mathcal{G} \cong \{0, 1\}^E$ 
& {$\sum_e \mathbf{1}[e \in E_\ell \veebar e \in E_{\ell^\prime}]$}
& $\ell_e = \mathbf{1}[\Pr(e \in E_\ell) \geq \frac12]$ 
& $\sum_e \min\{\Pr(e \in E_\ell), \Pr(e \notin E_\ell)\}$
& Knowledge Graphs Extraction, Summarization \\[4pt]

Semantic Embeddings \(\{ \ell \in \mathbb{R}^d : \lVert l\rVert_2 = 1 \}\)
&\(1 - \langle \ell, \ell^\prime \rangle\)
& \(\frac{ \mathbb{E}_\ell[\ell]}{\| \mathbb{E}_\ell[\ell]\|}\)
& \(1 - \| \mathbb{E}_\ell[\ell]\|\)
& Machine Translation \\[4pt]

Probability simplex $\Delta^K$ 
& \(\sum_k \ell_k \log \frac{\ell_k}{\ell_k\prime}\)
& \(\mathbb{E_\ell[\ell]}\)
& \(\mathbb{H}(\mathbb{E_\ell}[\ell])- \mathbb{E}_\ell[\mathbb{H(\ell)]}\)
& Single-label QA with predictive distributions \\[4pt]

\bottomrule
\end{tabular}
}
\label{tab:latent_examples}
\end{table*}
\subsection{Single-Label Classification}
\label{sec:singlelabel}
A standard task in uncertainty quantification for LLMs \citep{kuhn2023semanticuncertaintylinguisticinvariances,nikitin2024kernellanguageentropyfinegrained,duan2024shiftingattentionrelevancepredictive}
is classification over a set of \(K\) classes.
Let \([K] := \{1,\dots,K\}\) denote the set of valid labels which can, for example, be estimated from the $M$ Monte-Carlo generations $s \mid x$. A suitable metric for this space is the 
\emph{exact-match} (0--1) loss
\begin{equation}
d_T(\ell, \ell^\prime) := \mathbf{1}[\ell \neq \ell^\prime].
\end{equation}
\begin{restatable}{lemma}{ExactMatchBayes}
\label{lem:exactmatch_bayes}
For any given class $\ell \in [K]$, 
let $p_\ell = \Pr(g_T(s) = \ell)$
denote the relative frequency among the MC samples.
Under exact-match loss \(d_T(\hat \ell, \ell)=\mathbf{1}\{\hat \ell \neq \ell\}\), a Bayes-optimal action is any mode of \(p\),
\[
\ell_\text{Bayes} = \arg\max_{k \in [K]} \Pr(\ell=k) = \arg\max_{k \in [K]} p_k .
\]
The corresponding Minimum Bayes Risk is
\[
R_p(\ell_\text{Bayes})
:= \mathbb{E}_{\ell\sim p}\big[\mathbf{1}\{\ell_\text{Bayes} \neq \ell\}\big]
= 1 - \max_{k \in [K]} p_k .
\]
\end{restatable}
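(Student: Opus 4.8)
The plan is to directly evaluate the Bayes risk functional from \Cref{eq:bayes_risk} at an arbitrary candidate action $\tilde{\ell} \in [K]$ and show it collapses to $1 - p_{\tilde{\ell}}$. Starting from $R(\tilde{\ell}) = \mathbb{E}_{\ell \sim p}[\mathbf{1}\{\tilde{\ell} \neq \ell\}]$, I would use the elementary fact that the expectation of an indicator is the probability of the underlying event, so that $R(\tilde{\ell}) = \Pr(\ell \neq \tilde{\ell}) = 1 - \Pr(\ell = \tilde{\ell}) = 1 - p_{\tilde{\ell}}$. This identity is essentially the entire content of the lemma; once it is in place, the rest follows mechanically. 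The only point requiring care is the normalization $\sum_{k \in [K]} p_k = 1$: the masses $p_k = \Pr(g_T(s) = k)$ are by definition the push-forward weights (cf. \Cref{eq:mc}), and they sum to one precisely because $[K]$ is taken to contain the support of $g_T(s)$, which the lemma assumes. I would state this explicitly so that the step $\Pr(\ell \neq \tilde{\ell}) = 1 - p_{\tilde{\ell}}$ is rigorous.

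Next, I would note that minimizing $R(\tilde{\ell}) = 1 - p_{\tilde{\ell}}$ over the finite set $[K]$ is equivalent to maximizing $p_{\tilde{\ell}}$, so by the definition of the Bayes-optimal action in \Cref{eq:bayes_action} we get $\ell_{\text{Bayes}} \in \arg\max_{k \in [K]} p_k = \arg\max_{k \in [K]} \Pr(\ell = k)$. Since $[K]$ is finite the maximum is attained and the argmax is nonempty; if several classes achieve the maximal mass, each of them is equally Bayes-optimal, which is why the statement phrases the conclusion as ``a Bayes-optimal action is any mode of $p$''. Substituting any such maximizer back into the risk yields $R_p(\ell_{\text{Bayes}}) = 1 - p_{\ell_{\text{Bayes}}} = 1 - \max_{k \in [K]} p_k$, which is the claimed minimum Bayes risk.

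I do not expect a genuine obstacle here: the proof is a one-line computation plus a trivial optimization over a finite set. The ``hard part'', such as it is, is purely expository — making sure the reader sees that the push-forward distribution $p(\ell \mid x)$ restricted to $[K]$ is a genuine probability vector, so that the complementary-probability manipulation is valid, and flagging the tie-breaking so that the word ``any'' in the statement is justified.
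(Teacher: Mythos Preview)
Your proposal is correct and matches the paper's proof essentially line for line: fix an arbitrary candidate, compute $R(\tilde{\ell}) = 1 - p_{\tilde{\ell}}$ via the indicator-expectation identity, observe that minimizing is equivalent to maximizing $p_{\tilde{\ell}}$, and substitute back. The extra care you take with normalization and tie-breaking is fine but not strictly needed; the paper's version is the same argument stated more tersely.
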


Lemma~\ref{lem:exactmatch_bayes} recovers the result that, under 0--1 loss,
the Bayes-optimal prediction is the most probable class.
The Bayes risk is exactly the residual probability mass outside the most likely class,
\(1-\max_k p_k\), which quantifies the minimal achievable misclassification probability under \(p\).

\subsection{Multi-Label Classification / Prediction Sets}\label{sec:sets}

One way to address a degree of ambiguity in a question that permits more than one correct answer is to model the latent LLM responses $\mathcal{L}$ as the power set over a finite number of classes $\mathcal{L} = 2^{[K]}$. Every response is mapped to an (potentially empty) set of labels in $[K]$.
We illustrate this on an (ambiguous) question answering task. Consider:
\begin{quote}
\textbf{Question:} Which ocean borders the USA? \\
\textbf{Answers:} \{Pacific, Atlantic\}\\ 
\textbf{Response 1:} The answer is Pacific Ocean\\
\textbf{Response 2:} The answers are Pacific Ocean and Atlantic Ocean
\end{quote}
Here, the $g_T$ maps the first answer to $\{\text{Pacific}\}$ while the second response is mapped to $\{\text{Pacific}, \text{Atlantic}\}$. 
Each response $\ell$ can be identified by 1-hot encoded vectors $\{0, 1\}^K$. A suitable distance metric is the Hamming loss
\begin{equation}
    d_T(\ell, \ell^\prime) := \sum_{k=1}^K \mathbf{1}[\ell_k \neq \ell^\prime_k].
\end{equation}
\begin{restatable}{lemma}{HammingBayes}
\label{lem:hamming_bayes}
Let \(\ell \in \{0,1\}^{K}\) be distributed according to \(p(\ell \mid x)\) and consider the Hamming loss
\(
d_T(\ell, \ell^\prime) = \sum_{k}^{K} \mathbf{1}[\ell_k \neq \ell^\prime_k].
\)
The Bayes-optimal prediction
\(
\ell^\text{Bayes}
\)
is given component-wise by
\[
(\ell_\text{Bayes})_i
=
\mathbf{1}\!\left\{\Pr(\ell_i = 1) \ge \tfrac{1}{2}\right\}.
\]
The corresponding Minimum Bayes Risk is
\[
R(\ell_\text{Bayes})
=
\sum_{k=1}^{K}
\min\!\big\{\Pr(\ell_i = 1), \Pr(\ell_i = 0)\big\}
\;\le\; \tfrac{K}{2}.
\]
\end{restatable}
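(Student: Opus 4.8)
The plan is to exploit the fact that the Hamming loss decomposes coordinate-wise, so the Bayes risk objective $R(\tilde\ell) = \mathbb{E}_{\ell \sim p}[\sum_{k=1}^K \mathbf{1}\{\tilde\ell_k \neq \ell_k\}]$ separates into a sum of $K$ independent scalar problems. By linearity of expectation, $R(\tilde\ell) = \sum_{k=1}^K \mathbb{E}_{\ell \sim p}[\mathbf{1}\{\tilde\ell_k \neq \ell_k\}] = \sum_{k=1}^K \Pr(\ell_k \neq \tilde\ell_k)$. Crucially, the $k$-th summand depends only on the single bit $\tilde\ell_k \in \{0,1\}$, so minimizing the sum amounts to minimizing each term independently — no coupling between coordinates survives.

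Next I would solve the one-dimensional subproblem. For a fixed coordinate $i$, if we choose $\tilde\ell_i = 1$ the contribution is $\Pr(\ell_i = 0)$, and if we choose $\tilde\ell_i = 0$ it is $\Pr(\ell_i = 1)$. Hence the minimizing choice is $\tilde\ell_i = 1$ whenever $\Pr(\ell_i = 1) \ge \Pr(\ell_i = 0)$, equivalently $\Pr(\ell_i = 1) \ge \tfrac12$ (since the two probabilities sum to one), and $\tilde\ell_i = 0$ otherwise. This gives exactly $(\ell_\text{Bayes})_i = \mathbf{1}\{\Pr(\ell_i = 1) \ge \tfrac12\}$, with ties broken toward $1$ as in the statement (any tie-breaking rule is equally optimal). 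The resulting per-coordinate minimum value is $\min\{\Pr(\ell_i = 1), \Pr(\ell_i = 0)\}$, and summing over $i$ yields $R(\ell_\text{Bayes}) = \sum_{k=1}^K \min\{\Pr(\ell_k = 1), \Pr(\ell_k = 0)\}$.

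Finally, the bound $R(\ell_\text{Bayes}) \le K/2$ follows termwise: for any coordinate, $\min\{\Pr(\ell_i = 1), \Pr(\ell_i = 0)\} \le \tfrac12$ because two numbers summing to $1$ cannot both exceed $\tfrac12$, so their minimum is at most $\tfrac12$. Summing this bound over the $K$ coordinates gives the claim. I would also note that this is tight, attained when every coordinate is an unbiased coin flip under $p$.

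I do not anticipate a genuine obstacle here — the argument is essentially the classical decomposition of Hamming-loss Bayes estimation into marginal-threshold decisions, and the only care needed is to make the coordinate-wise separation explicit (justifying that the $\arg\min$ over the product set $\{0,1\}^K$ coincides with the product of coordinate-wise $\arg\min$'s) and to handle the tie case $\Pr(\ell_i = 1) = \tfrac12$ by observing both choices achieve the same risk, so the stated $\ge$ convention is without loss of generality.
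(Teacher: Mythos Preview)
Your proposal is correct and follows essentially the same approach as the paper: decompose the expected Hamming loss coordinate-wise via linearity of expectation, solve each binary subproblem by comparing $\Pr(\ell_i=0)$ against $\Pr(\ell_i=1)$, and bound each term by $\tfrac12$. Your additional remarks on tie-breaking and tightness are not in the paper's proof but are sound and do not diverge from its argument.
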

Each potential response is included in the Bayes-optimal response if it appears in the majority of the sampled responses.
The Bayes risk aggregates the uncertainty of whether each possible answer $\ell_i$ should (not) be included in the response. We also supplement the Bayes-optimal action and the associated risk under the $F_\beta$ loss in \Cref{app:fbeta_loss}.

\subsection{Semantic Embedding Spaces}
Another latent structure are \emph{semantic embedding spaces}, for example embedding texts on the $d$-dimensional unit sphere: 
\(
\mathcal{L} = \{ \ell \in \mathbb{R}^d : \lVert l\rVert_2 = 1 \}.
\)
A natural loss on \(\mathcal{L}\) is the cosine distance, which simplifies to:
\begin{equation}
d_T(\ell,\ell^\prime) := 1 - \frac{\langle \ell, \ell^\prime \rangle}{\lVert\ell\rVert_2\,\lVert\ell^\prime\rVert_2} = 1 - \langle \ell, \ell^\prime \rangle .
\end{equation}
\begin{restatable}{lemma}{SemanticBayes}
\label{lem:semantic_bayes}
Let \(\ \ell \in \mathbb{R}^d : \lVert l\rVert_2 = 1 \) be distributed according to \(p(\ell\mid x)\) with mean \(\mu := \mathbb{E}_{\ell\sim p}[\ell]\).
Consider the loss \(d(\ell, \ell^\prime) = 1 - \langle \ell, \ell^\prime \rangle\).
If \(\mu \neq 0\), the Bayes-optimal action is
\[
\ell_\text{Bayes} \;=\; \arg\min_{\ell \in \mathcal{L}} \mathbb{E}_{\ell\sim p}\!\left[1-\langle \ell,\ell^\prime\rangle\right]
\;=\; \frac{\mu}{\|\mu\|},
\]
and the corresponding Minimum Bayes Risk is
\[
R(\ell_\text{Bayes}) \;=\; \mathbb{E}_{\ell\sim p}\!\left[1-\langle \ell,\ell_\text{Bayes}\rangle\right]
\;=\; 1 - \|\mu\|.
\]
\end{restatable}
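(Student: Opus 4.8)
The plan is to collapse the optimization in \Cref{eq:bayes_action} to the maximization of a linear functional on the sphere and then invoke Cauchy--Schwarz. First I would expand the risk of an arbitrary candidate $\tilde\ell \in \mathcal{L}$ using linearity of expectation: since $d(\tilde\ell,\ell) = 1 - \langle \tilde\ell, \ell\rangle$ is affine in $\ell$, we get
\[
R(\tilde\ell) \;=\; \mathbb{E}_{\ell \sim p}\!\left[1 - \langle \tilde\ell, \ell\rangle\right] \;=\; 1 - \big\langle \tilde\ell,\ \mathbb{E}_{\ell \sim p}[\ell]\big\rangle \;=\; 1 - \langle \tilde\ell, \mu\rangle .
\]
Hence minimizing $R$ over the unit sphere is equivalent to maximizing $\tilde\ell \mapsto \langle \tilde\ell, \mu\rangle$ subject to $\lVert \tilde\ell\rVert_2 = 1$.

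Second, I would apply Cauchy--Schwarz: for any $\tilde\ell$ with $\lVert\tilde\ell\rVert_2 = 1$ we have $\langle \tilde\ell, \mu\rangle \le \lVert\tilde\ell\rVert_2\,\lVert\mu\rVert_2 = \lVert\mu\rVert_2$, with equality exactly when $\tilde\ell$ is a nonnegative scalar multiple of $\mu$. Combined with the constraint $\lVert\tilde\ell\rVert_2 = 1$ and the hypothesis $\mu \neq 0$, this pins down $\tilde\ell = \mu/\lVert\mu\rVert$ as the unique maximizer, so $\ell_\text{Bayes} = \mu/\lVert\mu\rVert$. Substituting back,
\[
R(\ell_\text{Bayes}) \;=\; 1 - \big\langle \tfrac{\mu}{\lVert\mu\rVert}, \mu\big\rangle \;=\; 1 - \lVert\mu\rVert .
\]

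Finally I would record the sanity check that this is a genuine non-negative dissimilarity: since $p$ is supported on the unit sphere, Jensen's inequality (convexity of $\lVert\cdot\rVert_2$) gives $\lVert\mu\rVert = \lVert\mathbb{E}[\ell]\rVert \le \mathbb{E}[\lVert\ell\rVert] = 1$, so $R(\ell_\text{Bayes}) = 1 - \lVert\mu\rVert \in [0,1]$ and vanishes precisely when $p$ is a Dirac on the sphere. There is no genuinely hard step here; the only subtlety worth flagging is the degenerate case $\mu = 0$ (e.g.\ an antipodally symmetric $p$), where every point of the sphere attains the same risk $1$ and the $\arg\min$ is non-unique — the statement avoids this by assuming $\mu \neq 0$, and I would add a one-line remark to that effect.
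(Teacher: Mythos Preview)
Your proof is correct and follows essentially the same approach as the paper: reduce the risk to $1 - \langle \tilde\ell, \mu\rangle$ via linearity of the inner product, then maximize the linear functional on the sphere. The only cosmetic difference is that you invoke Cauchy--Schwarz directly, while the paper first writes $\mu = \lVert\mu\rVert\,\hat\mu$ and bounds $\langle\hat\mu,\tilde\ell\rangle \le 1$ for unit vectors; your added sanity check that $R(\ell_\text{Bayes}) \in [0,1]$ and the remark on the degenerate case $\mu = 0$ are nice touches not present in the paper.
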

Lemma~\ref{lem:semantic_bayes} shows that under cosine distance on the unit sphere, the Bayes-optimal semantic prediction is the \emph{normalized mean embedding}.
Moreover, the Bayes risk is fully characterized by the unnormalized mean embedding \(\mu\) and how close it is to the sphere's surface: when \(\|\mu\|\) is small, the mass under \(p\) is dispersed over the sphere (high uncertainty), whereas a large
\(\|\mu\|\) indicates strong concentration (low uncertainty).

\subsection{Knowledge Graphs}

Text understanding, for example, in summarization tasks, often involves structuring its information \cite{jurafsky2014speech}. Knowledge graphs $G = (V, E)$ keep track of different entities $\mathcal{V}$ within a text and their relations $\mathcal{E}$. For example, the sentence
\begin{quote}
    \textbf{Text}: The Eiffel Tower is in Paris, which is the capital of France.
\end{quote}
introduces the entities $\{\text{Paris}, \text{France}, \text{Eiffel Tower}\}$ and the relations "is capital of" and "is in". Consequently, we can extract knowledge graphs from text paragraphs that define the relations over a set of entities $V$. While there are many potential mappings from token sequences $\mathcal{V}^\infty \mapsto \mathcal{G}$, here we discuss a simple approach that builds on the literature of knowledge graph extraction. In particular, we utilize KGGen \cite{mo2025kggenextractingknowledgegraphs} to obtain a set of annotated relations $E$ from a text excerpt, e.g. $(\text{"Paris"}, \text{"is capital of"}, \text{"France"})$. These relations imply the corresponding entities $V$ in the knowledge graph. Representing knowledge graphs $\ell$ with their relations $E_\ell$ is structurally isomorphic to the set-based latent structure of \Cref{sec:sets}: The relations can be identified by 1-hot encoded vectors $\mathcal{L} \cong \{0, 1\}^{\vert E\rvert}$. Consequently, we use the structural Hamming distance to derive the Bayes-optimal action and its associated risk. The Bayes-optimal graph for a given LLM response distribution over relations (edges) $p(E)$ under the structural Hamming distance $d_T\left(\ell, \ell^\prime\right) = \sum_e \mathbf{1}[e \in E_\ell \space \veebar \space e \in E_{\ell^\prime}]$ is given per-edge:
\begin{align}
\begin{split} E_{\ell_\text{Bayes}} &:= \{e : \mathrm{Pr}\left[e \in E_\ell\right] \geq \frac{1}{2}\} \\ V_{\ell_\text{Bayes}} &:= \{v : v \in E_{\ell_\text{Bayes}}\}. 
\end{split}
\end{align}
Analogous to \Cref{sec:sets}, the corresponding Minimum Bayes Risk is
\begin{align}
\begin{split}
    R(\ell_\text{Bayes}) = \sum_e \min\{\mathrm{Pr}&[e \in E_\ell], \mathrm{Pr}[e \notin E_\ell]\}
\end{split}
\end{align}
Again, we include an edge (relation) if it is included in the majority of the sampled knowledge graphs and the risk aggregates the uncertainty of whether each individual edge should (not) be included in the Bayes-optimal graph.

\subsection{Ambiguous Ground-Truth: Probability Simplex}\label{sec:simplex}

Here, we describe how our framework can account for aleatoric uncertainty associated with the true reference LLM response. The core idea is to model aleatoric uncertainty in the latent structure by representing responses $\ell \in \mathcal{L}$ as probability distributions over individual outcomes in the downstream task. In multi-answer question answering, this can be achieved by prompting the LLM to verbalize its own belief \cite{tian2023justaskcalibrationstrategies} over possible answers:
\begin{quote}
\textbf{Question:} With what probability will it be sunny, cloudy, or rainy tomorrow? \\
\textbf{Response:} With $80\%$ probability it will be sunny tomorrow, with $15\%$ cloudy, and with $5\%$ rainy.
\end{quote}
This response naturally corresponds to the probability vector
\(
\ell = (0.8,\; 0.15,\; 0.05)^T,
\)
where each coordinate represents the model’s expressed belief in one of the three outcomes.

More generally, let $\mathcal{C}$ be the space of mutually exclusive outcomes for a given prompt, in the case of question answering, $\mathcal{C} = [K]$. We define $\mathcal{L}$ to be the space of all so-called first-order probability measures over $\mathcal{C}$ similar to \citet{sale2023second}. In classification, this space is isomorphic to the $(K-1)$-dimensional probability simplex $\mathbb{L} \cong \Delta^{K-1} := \{\ell \in [0, 1]^K : \sum_k \ell_k = 1\}$. Each $\ell$ is a $K$-dimensional vector that assigns an outcome a probability. The more the probability mass is dispersed in $\ell$, the more aleatoric uncertainty is inherent to the LLM's response.

As the elements of the latent space are now distributions over answers, a suitable dissimilarity measure is the KL divergence $d_T(\ell, \ell^\prime) = \mathrm{KL[\ell \Vert \ell^\prime]} = \sum_k \ell_k\log \frac{\ell_k}{\ell_k^\prime}$. Even though the KL divergence is not a metric, we can still obtain a closed-form solution for the Bayes-optimal action:
\begin{restatable}{lemma}{ProbabilitySimplexKL}
\label{lem:simplexkl}
Let \(\ell \in \Delta^{K-1}\) be distributed according to \(p(\ell\mid x)\) and consider the KL divergence
\(
d_T(\ell, \ell^\prime) = \sum_k \ell_k \log \frac{\ell_k}{\ell_k^\prime}.
\)
The Bayes-optimal prediction is the mean:
\[
\ell_k^\text{Bayes} =\mathbb{E_\ell}[\ell].
\]
The corresponding Minimum Bayes Risk is
\[
R(\ell_\text{Bayes})
= \mathbb{H}(\mathbb{E_\ell}[\ell])- \mathbb{E}_\ell[\mathbb{H(\ell)]}.
\]
\end{restatable}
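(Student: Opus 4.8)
The plan is to minimize the Bayes risk $R(\tilde\ell) = \mathbb{E}_{\ell\sim p}\big[\mathrm{KL}(\ell\Vert\tilde\ell)\big]$ over $\tilde\ell \in \Delta^{K-1}$ by expanding the KL divergence and isolating the only term that depends on $\tilde\ell$. Writing $\mathrm{KL}(\ell\Vert\tilde\ell) = \sum_k \ell_k\log\ell_k - \sum_k \ell_k\log\tilde\ell_k = -\mathbb{H}(\ell) - \sum_k \ell_k\log\tilde\ell_k$ and taking the expectation over $\ell$, linearity gives
\[
R(\tilde\ell) = -\,\mathbb{E}_\ell[\mathbb{H}(\ell)] \;-\; \sum_k \mathbb{E}_\ell[\ell_k]\,\log\tilde\ell_k .
\]
The first term is a constant in $\tilde\ell$, so minimizing $R$ is equivalent to maximizing $\sum_k \bar\ell_k \log\tilde\ell_k$ where $\bar\ell := \mathbb{E}_\ell[\ell]$. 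This is a standard cross-entropy maximization over the simplex.

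Next I would solve the constrained optimization $\max_{\tilde\ell}\sum_k \bar\ell_k\log\tilde\ell_k$ subject to $\sum_k\tilde\ell_k = 1$ (the constraints $\tilde\ell_k\ge 0$ are inactive at the optimum since $\log\tilde\ell_k\to-\infty$ as $\tilde\ell_k\to 0^+$ whenever $\bar\ell_k>0$). A Lagrange-multiplier argument — or, more slickly, the observation that $\sum_k\bar\ell_k\log\tilde\ell_k - \sum_k\bar\ell_k\log\bar\ell_k = -\mathrm{KL}(\bar\ell\Vert\tilde\ell) \le 0$ with equality iff $\tilde\ell=\bar\ell$ by Gibbs' inequality — shows the unique maximizer is $\tilde\ell = \bar\ell$. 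Hence $\ell_\text{Bayes} = \mathbb{E}_\ell[\ell]$, which also confirms $\ell_\text{Bayes}$ lies in $\Delta^{K-1}$ as a convex combination of simplex points.

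Finally I would evaluate the risk at the optimum. Substituting $\tilde\ell = \bar\ell$ into the expression above,
\[
R(\ell_\text{Bayes}) = -\,\mathbb{E}_\ell[\mathbb{H}(\ell)] - \sum_k \bar\ell_k\log\bar\ell_k = \mathbb{H}(\bar\ell) - \mathbb{E}_\ell[\mathbb{H}(\ell)] = \mathbb{H}\big(\mathbb{E}_\ell[\ell]\big) - \mathbb{E}_\ell\big[\mathbb{H}(\ell)\big],
\]
which is exactly the claimed form (and is nonnegative by concavity of entropy / Jensen, i.e.\ it equals the Jensen–Shannon-type gap, consistent with $R_\text{Bayes}\ge 0$). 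I do not expect a serious obstacle here; the only points requiring mild care are justifying that the nonnegativity constraints are inactive (handled by the $-\infty$ barrier of $\log$, with a remark if some $\bar\ell_k = 0$, in which case one restricts to the supporting face) and noting that $R$ is finite only when the supports are compatible, which holds automatically since $\bar\ell_k = 0$ forces $\ell_k = 0$ $p$-almost surely. The main conceptual step is simply recognizing that the $\tilde\ell$-dependent part of the averaged KL is a cross-entropy, after which Gibbs' inequality closes the argument.
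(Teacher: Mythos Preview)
Your proof is correct and follows essentially the same route as the paper: expand the expected KL into a $\tilde\ell$-independent entropy term plus a cross-entropy, then optimize the latter over the simplex (the paper uses Lagrange multipliers where you additionally offer the cleaner Gibbs'-inequality argument) and substitute back to obtain the risk. Your extra remarks on boundary behavior and support compatibility are sound refinements the paper omits.
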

This latent structure applies to classification tasks with inherent ambiguity. Notably, the (unknown) reference latent response is still a point-mass on the true probability distribution and thus fulfills the assumptions of \Cref{eq:riskbound} to enable uncertainty estimation. 

\section{Experiments}

\begin{table*}[h]
\centering
\caption{Generation quality of structure-aware MBR (ours) versus other decoding strategies, including the sampled LLM response with minimal Bayes risk $\ell_\text{Sample}$  ({\textcolor{colfirst}{\textbf{best}}}). In latent space similarity and task-specific metrics, ours matches or outperforms other decoding.}
\small
\setlength{\tabcolsep}{4pt}
\resizebox{2\columnwidth}{!}{
\begin{tabular}{ll|cc|c|cc|ccc|c}
\toprule
\multicolumn{2}{c|}{\textbf{Dataset} (Latent $\mathcal{L}$)} & \multicolumn{2}{c|}{MAQA ($\{0, 1\}^K$)} & \multicolumn{1}{c|}{TriviaQA ($[K]$)} & \multicolumn{2}{c|}{\makecell[c]{WMT19 ($\mathbb{R}^d$)}} & \multicolumn{3}{c|}{Summarization ($\mathcal{G}$)} & \multicolumn{1}{c}{MAQA ($\Delta^{K-1}$)} \\
\multicolumn{2}{c|}{} & \small{Hamming $\downarrow$} & \small{F1 $\uparrow$} & \small{Exact Match $\downarrow$} & \small{COMET $\uparrow$} & \small{Cosine $\downarrow$} & \small{Hamming $\downarrow$} & \small{F1 $\uparrow$} & \small{Align Score $\uparrow$} & \small{KL $\downarrow$} \\
\midrule
\multirow{8}{*}{\small{Gemma-3-4B}} & Beam & {$0.892$}  & {$0.548$}  & {$\textcolor{colfirst}{\textbf{0.627}}$}  & {$\textcolor{colfirst}{\textbf{0.848}}$}  & {$0.131$}  & {$2.176$}  & {$0.192$}  & {$0.831$}  & {$10.330$}  \\
&\small{Self-Consistency} & {$0.892$}  & {$0.558$}  & {$0.629$}  & {$\textcolor{colfirst}{\textbf{0.848}}$}  & {$0.131$}  & {$2.212$}  & {$0.183$}  & {$0.852$}  & {$10.192$}  \\
&\rebuttal{Contr.} & {$1.341$}  & {$0.518$}  & {$0.675$}  & {$0.791$}  & {$0.149$}  & {$2.323$}  & {$0.221$}  & {$0.809$}  & {$10.826$}  \\
&\rebuttal{AR} & {$0.883$}  & {$0.565$}  & {$0.638$}  & {$0.846$}  & {$0.132$}  & {$2.207$}  & {$0.187$}  & {$0.852$}  & {$10.507$}  \\
&\rebuttal{Greedy} & {$0.927$}  & {$0.556$}  & {$0.632$}  & {$0.847$}  & {$0.131$}  & {$2.097$}  & {$\textcolor{colfirst}{\textbf{0.229}}$}  & {$0.828$}  & {$10.027$}  \\
&$\ell_\text{MAP}$ & {$0.856$}  & {$0.559$}  & \cellcolor[gray]{0.9}  & {$\textcolor{colfirst}{\textbf{0.848}}$}  & {$0.131$}  & {$2.069$}  & {$0.188$}  & {$0.832$}  & {$10.545$}  \\
&$\ell_\text{Sample}$ & {$0.838$}  & {$0.570$}  & \cellcolor[gray]{0.9}  & \cellcolor[gray]{0.9}  & {$0.130$}  & {$1.930$}  & {$0.211$}  & {$0.849$}  & {$10.514$}  \\
 & \cellcolor[gray]{0.9}Bayes \tiny{(ours)} & \cellcolor[gray]{0.9}{$\textcolor{colfirst}{\textbf{0.729}}$}  & \cellcolor[gray]{0.9}{$\textcolor{colfirst}{\textbf{0.576}}$}  & \multirow{-3}{*}{\cellcolor[gray]{0.9}{$0.632$}}  & \multirow{-2}{*}{\cellcolor[gray]{0.9}{$\textcolor{colfirst}{\textbf{0.848}}$}}  & \cellcolor[gray]{0.9}{$\textcolor{colfirst}{\textbf{0.124}}$}  & \cellcolor[gray]{0.9}{$\textcolor{colfirst}{\textbf{1.603}}$}  & \cellcolor[gray]{0.9}{$0.224$}  & \cellcolor[gray]{0.9}{$\textcolor{colfirst}{\textbf{0.876}}$}  & \cellcolor[gray]{0.9}{$\textcolor{colfirst}{\textbf{7.940}}$}  \\
\midrule
\multirow{8}{*}{\small{Gemma-3-12B}} & Beam & {$0.674$}  & {$\textcolor{colfirst}{\textbf{0.684}}$}  & {$0.417$}  & {$\textcolor{colfirst}{\textbf{0.863}}$}  & {$0.115$}  & {$2.051$}  & {$0.203$}  & {$0.843$}  & {$7.792$}  \\
&\small{Self-Consistency} & {$0.706$}  & {$0.678$}  & {$0.418$}  & {$\textcolor{colfirst}{\textbf{0.863}}$}  & {$0.114$}  & {$2.071$}  & {$0.200$}  & {$0.860$}  & {$7.980$}  \\
&\rebuttal{Contr.} & {$0.762$}  & {$0.661$}  & {$0.423$}  & {$\textcolor{colfirst}{\textbf{0.863}}$}  & {$0.115$}  & {$1.945$}  & {$\textcolor{colfirst}{\textbf{0.235}}$}  & {$0.860$}  & {$7.689$}  \\
&\rebuttal{AR} & {$0.717$}  & {$0.673$}  & {$0.424$}  & {$0.862$}  & {$0.115$}  & {$2.085$}  & {$0.210$}  & {$0.861$}  & {$8.104$}  \\
&\rebuttal{Greedy} & {$0.714$}  & {$0.673$}  & {$0.418$}  & {$\textcolor{colfirst}{\textbf{0.863}}$}  & {$0.115$}  & {$2.020$}  & {$0.214$}  & {$0.845$}  & {$7.859$}  \\
&$\ell_\text{MAP}$ & {$0.692$}  & {$0.677$}  & \cellcolor[gray]{0.9}  & {$\textcolor{colfirst}{\textbf{0.863}}$}  & {$0.114$}  & {$1.955$}  & {$0.186$}  & {$0.844$}  & {$8.051$}  \\
&$\ell_\text{Sample}$ & {$0.674$}  & {$0.680$}  & \cellcolor[gray]{0.9}  & \cellcolor[gray]{0.9}  & {$0.114$}  & {$1.846$}  & {$0.203$}  & {$0.869$}  & {$8.420$}  \\
 & \cellcolor[gray]{0.9}Bayes \tiny{(ours)} & \cellcolor[gray]{0.9}{$\textcolor{colfirst}{\textbf{0.601}}$}  & \cellcolor[gray]{0.9}{$\textcolor{colfirst}{\textbf{0.684}}$}  & \multirow{-3}{*}{\cellcolor[gray]{0.9}{$\textcolor{colfirst}{\textbf{0.416}}$}}  & \multirow{-2}{*}{\cellcolor[gray]{0.9}{$\textcolor{colfirst}{\textbf{0.863}}$}}  & \cellcolor[gray]{0.9}{$\textcolor{colfirst}{\textbf{0.109}}$}  & \cellcolor[gray]{0.9}{$\textcolor{colfirst}{\textbf{1.597}}$}  & \cellcolor[gray]{0.9}{$0.231$}  & \cellcolor[gray]{0.9}{$\textcolor{colfirst}{\textbf{0.889}}$}  & \cellcolor[gray]{0.9}{$\textcolor{colfirst}{\textbf{5.632}}$}  \\
\midrule
\multirow{8}{*}{\small{Qwen3-4B}} & Beam & {$0.953$}  & {$0.509$}  & {$\textcolor{colfirst}{\textbf{0.632}}$}  & {$\textcolor{colfirst}{\textbf{0.843}}$}  & {$0.136$}  & {$2.238$}  & {$0.208$}  & {$0.849$}  & {$12.786$}  \\
&\small{Self-Consistency} & {$0.975$}  & {$0.499$}  & {$0.634$}  & {$\textcolor{colfirst}{\textbf{0.843}}$}  & {$0.137$}  & {$2.258$}  & {$0.201$}  & {$0.842$}  & {$12.902$}  \\
&\rebuttal{Contr.} & {$1.212$}  & {$0.479$}  & {$0.645$}  & {$\textcolor{colfirst}{\textbf{0.843}}$}  & {$0.137$}  & {$2.226$}  & {$0.228$}  & {$0.833$}  & {$12.373$}  \\
&\rebuttal{AR} & {$1.022$}  & {$0.505$}  & {$0.648$}  & {$0.842$}  & {$0.136$}  & {$2.253$}  & {$0.205$}  & {$0.838$}  & {$13.297$}  \\
&\rebuttal{Greedy} & {$1.045$}  & {$0.503$}  & {$0.636$}  & {$\textcolor{colfirst}{\textbf{0.843}}$}  & {$0.137$}  & {$2.310$}  & {$\textcolor{colfirst}{\textbf{0.239}}$}  & {$0.825$}  & {$12.470$}  \\
&$\ell_\text{MAP}$ & {$0.913$}  & {$0.504$}  & \cellcolor[gray]{0.9}  & {$0.842$}  & {$0.137$}  & {$2.206$}  & {$0.180$}  & {$0.846$}  & {$12.885$}  \\
&$\ell_\text{Sample}$ & {$0.917$}  & {$0.508$}  & \cellcolor[gray]{0.9}  & \cellcolor[gray]{0.9}  & {$0.136$}  & {$2.074$}  & {$0.214$}  & {$0.851$}  & {$13.145$}  \\
 & \cellcolor[gray]{0.9}Bayes \tiny{(ours)} & \cellcolor[gray]{0.9}{$\textcolor{colfirst}{\textbf{0.858}}$}  & \cellcolor[gray]{0.9}{$\textcolor{colfirst}{\textbf{0.511}}$}  & \multirow{-3}{*}{\cellcolor[gray]{0.9}{$0.635$}}  & \multirow{-2}{*}{\cellcolor[gray]{0.9}{$\textcolor{colfirst}{\textbf{0.843}}$}}  & \cellcolor[gray]{0.9}{$\textcolor{colfirst}{\textbf{0.129}}$}  & \cellcolor[gray]{0.9}{$\textcolor{colfirst}{\textbf{1.851}}$}  & \cellcolor[gray]{0.9}{$0.222$}  & \cellcolor[gray]{0.9}{$\textcolor{colfirst}{\textbf{0.869}}$}  & \cellcolor[gray]{0.9}{$\textcolor{colfirst}{\textbf{10.415}}$}  \\
\midrule
\multirow{8}{*}{\small{Qwen3-30B-A3B}} & Beam & {$0.581$}  & {$0.695$}  & {$0.415$}  & {$0.863$}  & {$0.113$}  & {$2.422$}  & {$0.183$}  & {$0.841$}  & {$6.759$}  \\
&\small{Self-Consistency} & {$0.588$}  & {$0.686$}  & {$0.419$}  & {$0.863$}  & {$0.114$}  & {$2.430$}  & {$0.193$}  & {$0.807$}  & {$7.055$}  \\
&\rebuttal{Contr.} & {$0.719$}  & {$0.662$}  & {$0.428$}  & {$0.859$}  & {$0.114$}  & {$2.311$}  & {$\textcolor{colfirst}{\textbf{0.216}}$}  & {$0.836$}  & {$7.299$}  \\
&\rebuttal{AR} & {$0.603$}  & {$0.681$}  & {$0.424$}  & {$0.863$}  & {$0.114$}  & {$2.463$}  & {$0.183$}  & {$0.815$}  & {$6.873$}  \\
&\rebuttal{Greedy} & {$0.623$}  & {$0.690$}  & {$0.416$}  & {$\textcolor{colfirst}{\textbf{0.864}}$}  & {$0.113$}  & {$2.403$}  & {$0.193$}  & {$0.804$}  & {$6.920$}  \\
&$\ell_\text{MAP}$ & {$0.563$}  & {$0.694$}  & \cellcolor[gray]{0.9}  & {$0.863$}  & {$0.114$}  & {$2.301$}  & {$0.187$}  & {$0.837$}  & {$6.884$}  \\
&$\ell_\text{Sample}$ & {$0.551$}  & {$0.695$}  & \cellcolor[gray]{0.9}  & \cellcolor[gray]{0.9}  & {$0.113$}  & {$2.249$}  & {$0.192$}  & {$0.840$}  & {$7.283$}  \\
 & \cellcolor[gray]{0.9}Bayes \tiny{(ours)} & \cellcolor[gray]{0.9}{$\textcolor{colfirst}{\textbf{0.525}}$}  & \cellcolor[gray]{0.9}{$\textcolor{colfirst}{\textbf{0.698}}$}  & \multirow{-3}{*}{\cellcolor[gray]{0.9}{$\textcolor{colfirst}{\textbf{0.414}}$}}  & \multirow{-2}{*}{\cellcolor[gray]{0.9}{$0.863$}}  & \cellcolor[gray]{0.9}{$\textcolor{colfirst}{\textbf{0.109}}$}  & \cellcolor[gray]{0.9}{$\textcolor{colfirst}{\textbf{2.008}}$}  & \cellcolor[gray]{0.9}{$0.197$}  & \cellcolor[gray]{0.9}{$\textcolor{colfirst}{\textbf{0.863}}$}  & \cellcolor[gray]{0.9}{$\textcolor{colfirst}{\textbf{5.032}}$}  \\
\bottomrule
\end{tabular}

}
\label{tab:performance_generations}
\end{table*}

We experimentally study our framework of mapping LLM responses in a task-dependent latent space from two angles:\footnote{We provide code for our implementation at \href{https://github.com/dfuchsgruber/task_awareness_in_llms}{https://github.com/dfuchsgruber/task\_awareness\_in\_llms}.}
\begin{inparaenum}[(i)]
    \item Does the structure-aware Bayes-optimal response $\ell_\text{Bayes}$ outperform other decoding schemes over different evaluation metrics?
    \item Does uncertainty as the structure-aware Bayesian risk outperform existing uncertainty estimators in terms of indicating the quality of the generation?
\end{inparaenum}

\subsection{Experimental Setup}

\paragraph{Tasks and Latent Structures.}
We test our framework on multiple tasks $T$ and latent structures $\mathcal{L}$ that are equipped with corresponding measures of dissimilarity $d_T$.
We mainly use algorithmic post-processing and auxiliary language models to implement $g_T$ and map language outputs to their respective latent structures (see \Cref{app:postprocessing}).
The mappings $g_T$ often represent relatively easy tasks that we find can be reliably addressed with external language models if needed.

\begin{inparaenum}[(i)]
    \item \textbf{Classification}: We consider the single-answer question answering dataset TriviaQA~\cite{joshi-etal-2017-triviaqa} and map responses to one of the finite classes $\mathcal{L} = [K]$. We evaluate the Bayes-optimal response and the associated Bayes risk under the exact-match loss.
    \item \textbf{Sets}: To demonstrate set-structured responses, we use multi-answer question answering (MAQA \cite{yang2025maqaevaluatinguncertaintyquantification}), where the latent $\ell \in \{0,1\}^K$ represent sets of valid answers. We study both the Hamming distance and the F1 score as dissimilarity measures.
    \item \textbf{Undirected Graphs}: We represent summaries as undirected knowledge graphs \((V,E)\) extracted from generated text, following standard procedures from the speech processing literature \cite{jurafsky2014speech}. We evaluate abstractive summarization on a CNN/DailyMail-based benchmark dataset~\cite{see-etal-2017-get,tiiigerbenchmarksummarization} using the structural Hamming distance to compute risk. The generations and uncertainty are evaluated with respect to Hamming distance, F1 score, and the AlignScore \cite{zha2023alignscoreevaluatingfactualconsistency}. The latter requires reassembling raw texts as described in \Cref{app:postprocessing}.
    \item \textbf{Semantic Space \(\mathcal{S}\)}: Here, we consider machine translation on WMT19 (FI-EN) \cite{barrault-etal-2019-findings} by embedding the translations of the LLM into a semantic latent space. We use the cosine similarity to compute Bayesian risk, but also evaluate the quality of the generations and uncertainty using the COMET score\cite{rei2020cometneuralframeworkmt}. As this requires raw texts, we compute the COMET score for the closest sampled LLM response in terms of $d_T$.
    \item \textbf{Probability Simplex}: Lastly, to demonstrate absorbing aleatoric uncertainty into the task structure $\mathcal{L}$, we study multi-answer question answering with ground-truth probability distributions over the answers. We use the recently proposed annotations on MAQA \cite{tomov2025illusioncertaintyuncertaintyquantification}. The KL-divergence serves as latent distance $d_T$.
\end{inparaenum}

\paragraph{Models.}
We evaluate instruction-tuned versions of Gemma~3~4B, Gemma~3~12B \cite{gemmateam2025gemma3technicalreport}, Qwen~3~4B, and Qwen~3~30B~A3E \cite{qwen3technicalreport}. All models are used with default decoding parameters (temperature, top-$p$, and top-$k$), reflecting standard deployment settings. Prompts can be found in \Cref{app:prompts}. We draw $M=20$ samples as surrogates from the push-forward distribution $p(\ell \mid x)$ as in \Cref{eq:mc}.

\begin{figure*}[t!]
    \begin{subfigure}[t]{0.3\textwidth}
    \centering
    \resizebox{0.99\textwidth}{!}{
    \input{figures/uq_prr.pgf}
    }
    \captionsetup{labelformat=empty} %
    \caption[short]{\textit{Figure \thefigure{}a}: Average PRR rank of UQ methods over four LLMs on each task. Ours performs the best on most structures and competitive on classification (TriviaQA, $\mathcal{L}=[K]$).}
    \label{fig:uq_rank_prr}
    \end{subfigure}
    \hfill
    \begin{subfigure}[t]{0.3\textwidth}
    \centering
    \resizebox{1.15\textwidth}{!}{
    \input{figures/uq_maqa_prr2.pgf}
    }
    \captionsetup{labelformat=empty} %
    \caption[short]{\textit{Figure \thefigure{}b}: PRR for each UQ method on MAQA ($\mathcal{L}=\{0, 1\}^K$). Our Bayes risk correlates the most with the generation quality (F1 score).}
    \label{fig:uq_maqa_prr}
    \end{subfigure}
    \hfill
    \begin{subfigure}[t]{0.3\textwidth}
    \centering
    \resizebox{0.99\textwidth}{!}{
    \input{figures/latent_entropy_vs_risk_gemma-3-12b-it.pgf}
    }
    \captionsetup{labelformat=empty} %
    \caption[short]{\textit{Figure \thefigure{}c}: Entropy in the latent distribution $p(\ell \mid x)$ versus the Minimum Bayes Risk on MAQA ($\Delta^{K-1}$). The latter takes low values even for high entropy distributions as it is distance-aware through $d_T$.}
    \label{fig:uq_latent_entropy_vs_risk}
    \end{subfigure}
    \label{fig:uq_both_columns}
    \vspace{-1em}
\end{figure*}

\subsection{Bayes-Optimal Decoding in Latent Space}
\label{sec:experiments_decoding}

First, we compare the structure-aware Bayes optimal responses $\ell_\text{Bayes}$ to other decoding paradigms under different task-dependent metrics. 
\rebuttal{
We compare our approach to beam search, self-consistency decoding \cite{wang2023selfconsistencyimproveschainthought}, contrastive search (Contr.) \cite{su2022contrastive}, standard autoregressive sampling (AR), greedy decoding, and the MAP of the latent belief $\ell_\text{MAP}$ as well as the sampled response with minimal Bayes risk $\ell_\text{sample}$.
}
We study both the dissimilarity between the latent response and the reference answer in the latent space (Hamming, Cosine, F1, Exact Match, KL) as well as free-form text metrics like COMET score or AlignScore. \Cref{tab:performance_generations} shows that the Bayes-optimal response outperforms other decoding strategies over various tasks. For single-answer classification ($\mathcal{L} = [K]$) on TriviaQA,
the Bayes-optimal action, $\ell_\text{Sample}$, and $\ell_\text{MAP}$ coincide. On this simple structure, the most likely latent class $\ell_\text{Bayes} \in [K]$ typically aligns with the high likelihood raw language response of beam search decoding. 
Overall, the Bayes-optimal responses are of higher quality compared to the self-consistency decoding baseline \cite{wang2023selfconsistencyimproveschainthought} and responding with the majority vote among the sampled latent responses ($\ell_\text{MAP}$). We also find it to outperform MBR in the space of raw language $\ell_\text{Sample}$ as defined in \Cref{eq:lsample}. This highlights the power of obtaining the optimal LLM response in closed form \emph{in the latent space} as $\ell_\text{Bayes}$ outperforms the lowest risk MC sample among the responses.

We illustrate this on an example for $\mathcal{L}$ containing sets of possible answers as described in \Cref{sec:sets}. The Bayes-optimal response for the Hamming distance metric includes an element $y$ only if the majority of MC samples include $y$ as well.
If the LLM is uncertain, these samples are likely disjunct sets of different elements $y$. For example, asking for "Which oceans border the USA?" may result in the latent responses \(\{\texttt{Pacific}\}\), \(\{\texttt{Atlantic}\}\), \(\{\texttt{Indian}\}\).
As none of the outcomes is included in the majority of the MC samples, the Bayes-optimal prediction will be the empty set even if every individual MC sample $\ell^{(i)}$ is itself non-empty. Here, the latent structure enables LLM to abstain from a prediction if the associated uncertainty (variation in the push-forward latent distribution) is too large.

\paragraph{Bayes-Optimal Decoding Improves Generations under Uncertainty}
We also study what drives the performance of the Bayes-optimal response $\ell_\text{Bayes}$. \cref{fig:hamming_distance_improvement} shows that we observe the most performance gains over other decoding strategies in Hamming distance to the reference response $\Delta = d(\ell^*, \ell_\text{Beam}) - d(\ell^*, \ell_\text{Bayes})$ when the latent push-forward distribution $p(\ell \mid x)$ is high. This pattern is consistent across models and tasks (\Cref{app:entropy_performance}).
At low entropy, the MC samples are similar and coincide with beam search and the Bayes-optimal response.
As the entropy increases, $\ell_\text{Bayes}$ can account for this variability by using a centroid response, while beam search and MBR output one of the many low-probability answers. This shows that the Bayes-optimal action outperforms other decoding methods, particularly when the LLM is uncertain about its response.

\begin{figure}[H]
    \centering
    \input{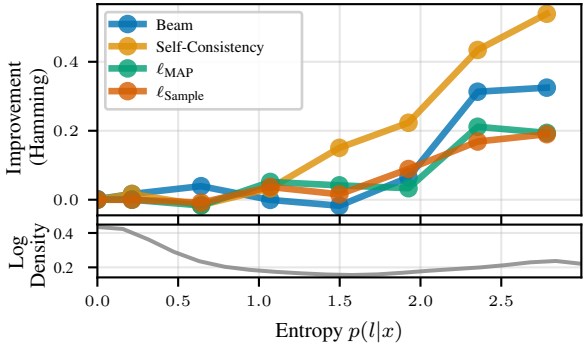}
    \caption{Improvement (Hamming distance) of our Bayes-optimal prediction over each other decoding baseline for set-based multi-answer QA (MAQA) vs.\ the entropy of the push-forward latent distribution $p(\ell \mid x)$. Under high variability, we synthesize answers that are substantially different from other outputs.}
    \label{fig:hamming_distance_improvement}
    \vspace{-1.5em}
\end{figure}

\subsection{Uncertainty Quantification via Bayes Risk}

\paragraph{Baselines.}
We consider the common uncertainty estimators: Maximum sentence probability (MSP), semantic entropy (SE)~\cite{kuhn2023semanticuncertaintylinguisticinvariances}, kernel language entropy (KLE)~\cite{nikitin2024kernellanguageentropyfinegrained}, shifting attention to relevance (SAR)~\cite{duan2024shiftingattentionrelevancepredictive}, CoCoA~\cite{vashurin2025uncertaintyquantificationllmsminimum} and $p(\text{True})$~\cite{kadavath2022languagemodelsmostlyknow}. 
\rebuttal{We also compare to using the entropy of the latent distribution \(H(p(\ell|x))\).}

\paragraph{Evaluation metrics.}
We evaluate uncertainty estimates using Prediction Rejection Ratio (PRR)\cite{malinin2021uncertaintyestimationautoregressivestructured,vashurin-etal-2025-benchmarking} and concordance AUC (AUC) \cite{therneau2024concordance}. Both measure the alignment between an uncertainty estimate and the actual task-dependent quality metric (e.g. Hamming distance, Exact-Match / Accuracy, Align Score, COMET, KL divergence). Higher values indicate better alignment between estimated uncertainty and task-level performance.

\vspace{-0.5em}
\paragraph{Results.}
\Cref{fig:uq_rank_prr} ranks each uncertainty estimator per model and task in terms of the PRR and averages over the four LLMs. On 4 out of 5 structures, the risk-based uncertainty of our framework predicts the response quality with respect to the task-dependent performance metric the best. On classification ($\mathcal{L} = [K]$), the task existing estimators are explicitly designed for, our framework performs competitively. Many of the baseline approaches are based on a notion of semantic variability among the MC LLM responses. Only our structure-aware Bayesian risk explicitly accounts for 
\begin{inparaenum}[(i)]
    \item A task-dependent \emph{structural variability}, and
    \item the \emph{spread} of the distribution in terms of the structural task-aware distance $d_T$.
\end{inparaenum}
We explicitly visualize the PRR for the set structure $\mathcal{L} = \{0, 1\}^K$ in \Cref{fig:uq_maqa_prr} and supply the corresponding AUC and PRR values for each model in \Cref{app:uq}. We observe substantial improvements of our structure-aware uncertainty estimate over the distance-agnostic variability-based methods like Semantic Entropy (SE). \Cref{fig:uq_latent_entropy_vs_risk} shows how Bayesian risk goes beyond simply measuring variability in the response's semantics and factors in how different they are in terms of the task-dependent structural dissimilarity $d_T$.

\section{Related Work}
\paragraph{MBR decoding in LLMs}
MBR decoding has primarily been studied in machine translation, where expected risk is estimated using a learned similarity such as BERTScore or BLEURT to select from candidate outputs \cite{eikema-aziz-2022-sampling,muller-sennrich-2021-understanding}. \citet{bertsch2023itsmbrwaydown} show that different decoding strategies, including self-consistency \cite{wang2023selfconsistencyimproveschainthought}, can be interpreted as implicitly minimizing Bayes risk as well. They observe that MBR is improved when using the same distance $d$ for decoding and in evaluation.
Recent work extends MBR beyond translation: \citet{wu2025betterinstructionfollowingminimumbayes} use an LLM judge as a utility function to select higher-quality responses, while \citet{lukasik2024regressionawareinferencellms} generalize MBR to regression settings, showing that Bayesian-optimal actions under squared- and absolute-error losses outperform beam search. Concurrently with our work, \citet{eikema2025structureconditionalminimumbayesrisk} modify existing similarity metrics based on task-specific output structures. However, they do not map responses to a structured latent domain to synthesize Bayes-optimal outputs and instead rely on costly neural metrics in the space of raw language.

\paragraph{UQ in LLMs}
A wide range of uncertainty quantification (UQ) methods for LLMs has been proposed \citep{vashurin-etal-2025-benchmarking,liu2025uncertaintyquantificationconfidencecalibration}. Existing approaches broadly fall into three categories: \emph{information-based} methods that analyze token-level predictive distributions (e.g., MSP); \emph{sample-diversity} methods that generate multiple outputs and assess their semantic variability, optionally incorporating probability estimates \cite{kuhn2023semanticuncertaintylinguisticinvariances,duan2024shiftingattentionrelevancepredictive,nikitin2024kernellanguageentropyfinegrained}; and \emph{verbalized} or reflexive methods that prompt models to explicitly express uncertainty \cite{kadavath2022languagemodelsmostlyknow}. With the exception of verbalized approaches which often perform worst empirically \citep{vashurin-etal-2025-benchmarking} most methods consider responses only as raw language or in a semantic space. Our risk-based estimator fully exploits the task's structure by leveraging the distance $d_T$ tailored to the specific use-case.

Previous work also studied uncertainty in the context of Bayesian risk: \citet{vashurin2025uncertaintyquantificationllmsminimum} use MBR to enhance sample-diversity methods by incorporating confidence measures such as maximum sentence probability. Similar to our work, \citet{smith2025rethinkingaleatoricepistemicuncertainty} propose to interpret minimum Bayes risk as a principled uncertainty measure to which we draw connections in \Cref{app:wasserstein_uncertainty}, while \citet{wang2024subjectiveuncertaintyquantificationcalibration} proposes risk-based uncertainty as well.
,We show that this interpretation of Bayesian risk as uncertainty is particularly effective when directly incorporating the task's latent structure through the distance function $d_T$. \rebuttal{Lastly, in \Cref{app:information_theoretic} we set our UQ definitions into perspective with an information theoretic Bayesian decomposition of epistemic and aleatoric uncertainty \cite{depeweg2018decompositionuncertaintybayesiandeep,wimmer2023quantifyingaleatoricepistemicuncertainty}.}

\section{Discussion}

\paragraph{Limitations}
Our framework generalizes to arbitrary latent structures but its efficiency hinges on an easy-to-compute Bayes optimal action. Another bottleneck may be the mapping from language to the latents \( g_T \), even though such post-processing is typically done anyway when systematically using LLM responses. For some tasks, we also may require free-form text responses after all, and constructing an inverse mapping \( g_T^{-1} \) from latent actions back to natural language may not be feasible to compute. Inverse mappings \( g_T^{-1} \) could also be constructed algorithmically, for example by relying on auxiliary LLMs. We want to stress that our framework is most useful for tasks in which the concrete latent response is more useful than the linguistic form. 

We show that the generations of our framework improve over existing decoding methods when the LLM outputs diverse responses under uncertainty. This mild assumption is made by most output-informed or sampling-based uncertainty estimators but may not apply to every task to the same extent. Another assumption is that the LLM's responses can be mapped to the latent domain at all.
We only empirically validate this assumption for strong instruction-tuned LLMs. Our method requires access to the predictive distribution through MC samples which can be computationally demanding in certain applications, similar to existing MBR frameworks and uncertainty estimators. This can be mitigated by reducing the sample size which we find to still yield good results in \Cref{app:computational_ablations}.

\paragraph{Towards Task-Awareness}
Our results suggest that LLM generation and uncertainty quantification can be improved by making the task's structure explicit through an appropriate latent representation. When responses are interpreted in a task-aware space, Minimum Bayes Risk decoding enables the synthesis of Bayes-optimal actions that improve output quality beyond generations obtained in the space of natural language. Moreover, the resulting Bayes risk provides a principled, task-aligned uncertainty measure that consistently tracks response quality better than existing estimators. Hence, we believe that our framework offers a highly general paradigm that facilitates the reliable integration of LLMs across a wide range of different downstream tasks. The perspective of interpreting an LLM's outputs in a task-aware way extends beyond generation quality and uncertainty estimation and can serve as a conceptual framework for addressing other open problems in LLM research in a principled and application-oriented way.

\section*{Impact Statement}
In this work, we examine how generation performance and uncertainty quantification in Large Language Models can be improved. While any research may be misused, our primary goal is to improve the reliability of
these models to support their safe deployment in critical domains. We believe the benefits will
outweigh the potential risks.

\section*{Acknowledgements}

We want to give special thank to Soroush H. Zargarbashi for giving helpful suggestions
on an early draft of the manuscript. We also want to thank Franz Rieger, Niklas Kemper, David Lüdke, and Filippo Guerranti for reviewing the paper.
The research presented has been performed in the frame of
the RADELN project funded by TUM Georg Nemetschek Institute Artificial Intelligence for the
Built World (GNI). It is further supported by the Bavarian Ministry of Economic Affairs, Regional
Development and Energy with funds from the Hightech Agenda Bayern.

\bibliography{example_paper_updated}
\bibliographystyle{icml2026}

\newpage
\appendix
\onecolumn
\section{Additional Experiments}
\subsection{Computational Ablations}
\label{app:computational_ablations}
As expected, the quality of both the Bayes-optimal decoding action and the Bayes risk as an uncertainty estimate depends on the number of samples used to approximate the model-induced distribution. In \Cref{fig:computation_maqa}, we analyze this dependence for decoding performance on set-structured outputs under Hamming loss. Performance consistently improves with increasing sample size and, notably, even a small number of samples suffices to outperform beam search, indicating that substantial gains can be achieved with relatively low additional computational cost.

For uncertainty quantification, the dependence of the Bayes-optimal output \(\ell_{\mathrm{Bayes}}\) on the number of samples introduces an interaction effect that leads to stable and consistent behavior as the sample size grows (\Cref{fig:computation_uq_maqa_bayes}). When the output is fixed, for example, to beam search, we again observe a trend similar to that of decoding performance, with uncertainty estimates improving steadily as the number of samples increases (\Cref{fig:computation_uq_maqa_beam}).

\begin{figure}[h!]
    \centering
    \includegraphics[width=\columnwidth]{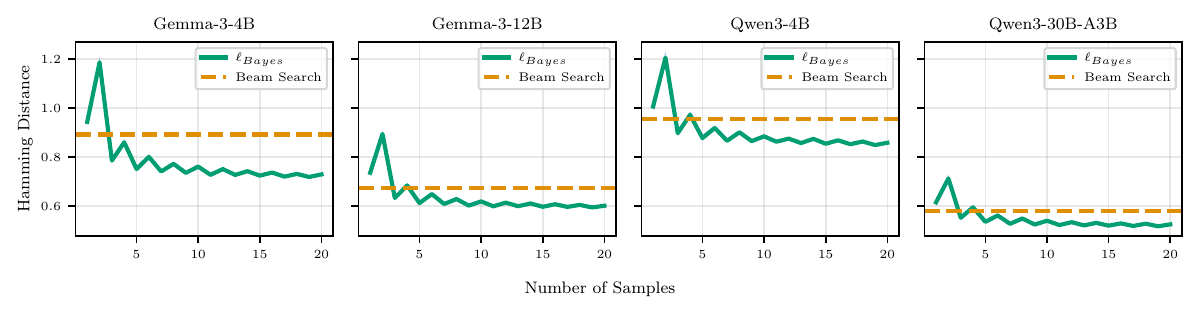}
    \caption{Performance in Hamming Distance of \(\ell_{Bayes}\) estimator on Multi-Answer QA over an increasing numbers of MC samples.}
    \label{fig:computation_maqa}
    \vspace{-1em}
\end{figure}

\begin{figure}[h!]
    \centering
    \includegraphics[width=\columnwidth]{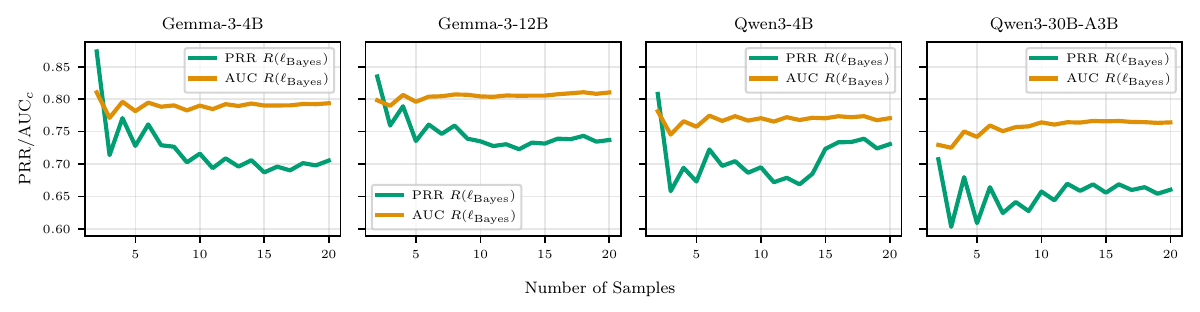}
    \caption{UQ performance of \(R(\ell_{Bayes})\) on Multi-Answer QA over an increasing numbers of MC samples.}
    \label{fig:computation_uq_maqa_bayes}
    \vspace{-1em}
\end{figure}

\begin{figure}[h!]
    \centering
    \includegraphics[width=\columnwidth]{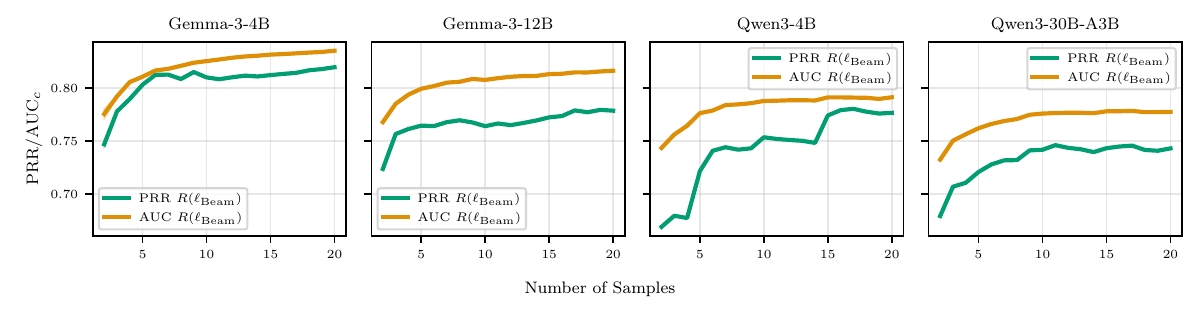}
    \caption{UQ performance of \(R(\ell_{Beam})\) on Multi-Answer QA over an increasing numbers of MC samples.}
    \label{fig:computation_uq_maqa_beam}
    \vspace{-1em}
\end{figure}

\subsection{Uncertainty Estimation}
\label{app:uq}

We supply the performance of different uncertainty estimators, including our Bayesian risk $R(\ell_\text{Bayes})$ in terms of both PRR and AUC in \Cref{tab:uq_auc_bayes,tab:uq_auc_beam,tab:uq_prr_bayes,tab:uq_prr_beam}. To that end, we measure the alignment between each respective proxy for uncertainty and the task-dependent performance metric. We also study if our uncertainty aligns with the performance of
\begin{inparaenum}[(i)]
    \item our Bayes-optimal response $\ell_\text{Bayes}$ and
    \item the beam search response.
\end{inparaenum}
In \Cref{tab:uq_auc_bayes,tab:uq_prr_bayes}, we find that both in terms of AUC and PRR, our risk-based uncertainty outperforms in most metrics and tasks or is at least competitive in terms predicting the performance of the Bayes-optimal action $\ell_\text{Bayes}$. Similarly, even when investigating the performance of the beam search response, Bayesian risk gives a strong performing estimate of the associated uncertainty as depicted in \Cref{tab:uq_auc_beam,tab:uq_prr_bayes}. These findings highlight the merits of our risk-based estimate of uncertainty as it correlates well with the performance of the LLM's output.

\begin{table*}[t]
\centering
\caption{Uncertainty quantification PRR $\uparrow$ for different estimators on all tasks (\textcolor{colfirst}{best} and \textcolor{colsecond}{runner-up}). We measure how well the uncertainty aligns with metrics computed from the Bayes optimal response.}
\small
\setlength{\tabcolsep}{4pt}
\resizebox{\columnwidth}{!}{
\begin{tabular}{ll|cc|c|cc|ccc|c}
\toprule
\multicolumn{2}{c|}{\textbf{Dataset}} & \multicolumn{2}{c|}{MAQA
$\{0, 1\}^K$} & \multicolumn{1}{c|}{TriviaQA
$[K]$} & \multicolumn{2}{c|}{WMT19
$R^d$} & \multicolumn{3}{c|}{CNN/DailyMail
$\mathcal{G}$} & \multicolumn{1}{c}{MAQA
$\Delta^{K-1}$} \\
\multicolumn{2}{c|}{} & Hamming & F1 & Exact Match & COMET & Cosine & Hamming & F1 & Align Score & KL Divergence \\
\midrule
\multirow{8}{*}{\small{Gemma-3-4B}} & $P(\text{True})$ & {$0.148$}  & {$0.134$}  & {$0.343$}  & {$0.082$}  & {$-0.036$}  & {$-0.182$}  & {$-0.207$}  & {$-0.031$}  & {$-0.147$}  \\
 &SAR & {$0.617$}  & {$0.743$}  & {$0.569$}  & {$\textcolor{colfirst}{\textbf{0.336}}$}  & {$\textcolor{colsecond}{\textbf{0.266}}$}  & {$0.191$}  & {$\textcolor{colfirst}{\textbf{0.388}}$}  & {$0.109$}  & {$\textcolor{colsecond}{\textbf{0.732}}$}  \\
 &COCOA & {$0.581$}  & {$0.688$}  & {$0.573$}  & {$0.286$}  & {$0.210$}  & {$\textcolor{colsecond}{\textbf{0.222}}$}  & {$0.222$}  & {$0.153$}  & {$0.694$}  \\
 &KLE & {$0.675$}  & {$\textcolor{colsecond}{\textbf{0.749}}$}  & {$\textcolor{colsecond}{\textbf{0.697}}$}  & {$0.319$}  & {$0.161$}  & {$0.099$}  & {$0.044$}  & {$0.041$}  & {$0.697$}  \\
 &SE & {$0.776$}  & {$0.719$}  & {$0.666$}  & {$0.146$}  & {$0.079$}  & {$0.174$}  & {$0.199$}  & {$0.050$}  & {$0.703$}  \\
 &MSP & {$0.450$}  & {$0.571$}  & {$0.512$}  & {$0.161$}  & {$0.120$}  & {$0.110$}  & {$0.106$}  & {$0.175$}  & {$0.645$}  \\
 &\rebuttal{Latent Entropy} & {$\textcolor{colsecond}{\textbf{0.804}}$}  & {$0.705$}  & {$\textcolor{colfirst}{\textbf{0.713}}$}  & {$0.232$}  & {$-0.006$}  & {$0.137$}  & {$0.170$}  & {$\textcolor{colsecond}{\textbf{0.240}}$}  & {$0.682$}  \\
 &\cellcolor[gray]{0.9}$R_\text{Bayes}$\tiny{(ours)} & \cellcolor[gray]{0.9}{$\textcolor{colfirst}{\textbf{0.819}}$}  & \cellcolor[gray]{0.9}{$\textcolor{colfirst}{\textbf{0.811}}$}  & \cellcolor[gray]{0.9}{$0.674$}  & \cellcolor[gray]{0.9}{$\textcolor{colsecond}{\textbf{0.327}}$}  & \cellcolor[gray]{0.9}{$\textcolor{colfirst}{\textbf{0.282}}$}  & \cellcolor[gray]{0.9}{$\textcolor{colfirst}{\textbf{0.677}}$}  & \cellcolor[gray]{0.9}{$\textcolor{colsecond}{\textbf{0.355}}$}  & \cellcolor[gray]{0.9}{$\textcolor{colfirst}{\textbf{0.289}}$}  & \cellcolor[gray]{0.9}{$\textcolor{colfirst}{\textbf{0.754}}$}  \\
\midrule
\multirow{8}{*}{\small{Gemma-3-12B}} & $P(\text{True})$ & {$0.420$}  & {$0.383$}  & {$0.250$}  & {$0.151$}  & {$-0.031$}  & {$-0.186$}  & {$-0.031$}  & {$-0.096$}  & {$0.063$}  \\
 &SAR & {$0.697$}  & {$\textcolor{colsecond}{\textbf{0.726}}$}  & {$\textcolor{colfirst}{\textbf{0.589}}$}  & {$\textcolor{colsecond}{\textbf{0.270}}$}  & {$\textcolor{colsecond}{\textbf{0.199}}$}  & {$0.098$}  & {$\textcolor{colsecond}{\textbf{0.207}}$}  & {$\textcolor{colsecond}{\textbf{0.130}}$}  & {$\textcolor{colsecond}{\textbf{0.652}}$}  \\
 &COCOA & {$0.655$}  & {$0.655$}  & {$0.568$}  & {$0.263$}  & {$0.149$}  & {$0.096$}  & {$0.181$}  & {$0.052$}  & {$0.632$}  \\
 &KLE & {$0.670$}  & {$0.686$}  & {$\textcolor{colsecond}{\textbf{0.582}}$}  & {$0.236$}  & {$0.097$}  & {$0.079$}  & {$0.016$}  & {$-0.012$}  & {$0.614$}  \\
 &SE & {$0.721$}  & {$0.672$}  & {$0.566$}  & {$0.081$}  & {$0.035$}  & {$\textcolor{colsecond}{\textbf{0.199}}$}  & {$0.140$}  & {$0.023$}  & {$0.620$}  \\
 &MSP & {$0.504$}  & {$0.512$}  & {$0.539$}  & {$0.180$}  & {$0.105$}  & {$0.013$}  & {$0.184$}  & {$-0.004$}  & {$0.590$}  \\
 &\rebuttal{Latent Entropy} & {$\textcolor{colsecond}{\textbf{0.767}}$}  & {$0.678$}  & {$0.577$}  & {$0.220$}  & {$-0.066$}  & {$0.151$}  & {$0.177$}  & {$0.119$}  & {$0.613$}  \\
 &\cellcolor[gray]{0.9}$R_\text{Bayes}$\tiny{(ours)} & \cellcolor[gray]{0.9}{$\textcolor{colfirst}{\textbf{0.778}}$}  & \cellcolor[gray]{0.9}{$\textcolor{colfirst}{\textbf{0.726}}$}  & \cellcolor[gray]{0.9}{$0.576$}  & \cellcolor[gray]{0.9}{$\textcolor{colfirst}{\textbf{0.316}}$}  & \cellcolor[gray]{0.9}{$\textcolor{colfirst}{\textbf{0.263}}$}  & \cellcolor[gray]{0.9}{$\textcolor{colfirst}{\textbf{0.581}}$}  & \cellcolor[gray]{0.9}{$\textcolor{colfirst}{\textbf{0.258}}$}  & \cellcolor[gray]{0.9}{$\textcolor{colfirst}{\textbf{0.398}}$}  & \cellcolor[gray]{0.9}{$\textcolor{colfirst}{\textbf{0.711}}$}  \\
\midrule
\multirow{8}{*}{\small{Qwen3-4B}} & $P(\text{True})$ & {$0.378$}  & {$0.237$}  & {$0.508$}  & {$0.221$}  & {$\textcolor{colsecond}{\textbf{0.209}}$}  & {$0.050$}  & {$0.014$}  & {$0.128$}  & {$0.124$}  \\
 &SAR & {$0.464$}  & {$0.590$}  & {$0.519$}  & {$0.235$}  & {$0.177$}  & {$\textcolor{colsecond}{\textbf{0.299}}$}  & {$\textcolor{colsecond}{\textbf{0.200}}$}  & {$0.140$}  & {$\textcolor{colsecond}{\textbf{0.516}}$}  \\
 &COCOA & {$0.398$}  & {$0.569$}  & {$0.516$}  & {$\textcolor{colsecond}{\textbf{0.276}}$}  & {$0.199$}  & {$0.231$}  & {$\textcolor{colfirst}{\textbf{0.252}}$}  & {$\textcolor{colsecond}{\textbf{0.153}}$}  & {$0.498$}  \\
 &KLE & {$0.286$}  & {$0.560$}  & {$0.573$}  & {$0.249$}  & {$0.125$}  & {$0.191$}  & {$0.150$}  & {$0.006$}  & {$0.444$}  \\
 &SE & {$0.391$}  & {$\textcolor{colsecond}{\textbf{0.644}}$}  & {$0.547$}  & {$0.143$}  & {$0.063$}  & {$0.131$}  & {$0.046$}  & {$0.016$}  & {$0.465$}  \\
 &MSP & {$0.328$}  & {$0.483$}  & {$0.510$}  & {$0.200$}  & {$0.184$}  & {$0.070$}  & {$0.187$}  & {$0.084$}  & {$0.498$}  \\
 &\rebuttal{Latent Entropy} & {$\textcolor{colsecond}{\textbf{0.585}}$}  & {$0.597$}  & {$\textcolor{colsecond}{\textbf{0.612}}$}  & {$0.147$}  & {$-0.046$}  & {$0.154$}  & {$0.021$}  & {$0.019$}  & {$0.492$}  \\
 &\cellcolor[gray]{0.9}$R_\text{Bayes}$\tiny{(ours)} & \cellcolor[gray]{0.9}{$\textcolor{colfirst}{\textbf{0.776}}$}  & \cellcolor[gray]{0.9}{$\textcolor{colfirst}{\textbf{0.788}}$}  & \cellcolor[gray]{0.9}{$\textcolor{colfirst}{\textbf{0.631}}$}  & \cellcolor[gray]{0.9}{$\textcolor{colfirst}{\textbf{0.309}}$}  & \cellcolor[gray]{0.9}{$\textcolor{colfirst}{\textbf{0.272}}$}  & \cellcolor[gray]{0.9}{$\textcolor{colfirst}{\textbf{0.487}}$}  & \cellcolor[gray]{0.9}{$0.152$}  & \cellcolor[gray]{0.9}{$\textcolor{colfirst}{\textbf{0.272}}$}  & \cellcolor[gray]{0.9}{$\textcolor{colfirst}{\textbf{0.569}}$}  \\
\midrule
\multirow{8}{*}{\small{Qwen3-30B-A3B}} & $P(\text{True})$ & {$0.024$}  & {$0.042$}  & {$0.261$}  & {$0.026$}  & {$-0.014$}  & {$-0.070$}  & {$0.013$}  & {$0.001$}  & {$0.180$}  \\
 &SAR & {$0.664$}  & {$\textcolor{colsecond}{\textbf{0.644}}$}  & {$0.503$}  & {$0.247$}  & {$0.172$}  & {$\textcolor{colsecond}{\textbf{0.343}}$}  & {$\textcolor{colfirst}{\textbf{0.471}}$}  & {$0.012$}  & {$\textcolor{colsecond}{\textbf{0.672}}$}  \\
 &COCOA & {$0.667$}  & {$0.632$}  & {$\textcolor{colfirst}{\textbf{0.545}}$}  & {$\textcolor{colsecond}{\textbf{0.285}}$}  & {$\textcolor{colsecond}{\textbf{0.185}}$}  & {$0.299$}  & {$\textcolor{colsecond}{\textbf{0.404}}$}  & {$0.032$}  & {$0.670$}  \\
 &KLE & {$0.610$}  & {$0.613$}  & {$0.506$}  & {$0.278$}  & {$0.117$}  & {$0.217$}  & {$0.322$}  & {$-0.115$}  & {$0.627$}  \\
 &SE & {$0.648$}  & {$0.629$}  & {$0.500$}  & {$0.063$}  & {$0.048$}  & {$0.225$}  & {$0.344$}  & {$-0.078$}  & {$0.652$}  \\
 &MSP & {$0.549$}  & {$0.557$}  & {$\textcolor{colsecond}{\textbf{0.521}}$}  & {$0.190$}  & {$0.151$}  & {$0.271$}  & {$0.300$}  & {$\textcolor{colsecond}{\textbf{0.119}}$}  & {$0.615$}  \\
 &\rebuttal{Latent Entropy} & {$\textcolor{colsecond}{\textbf{0.701}}$}  & {$0.617$}  & {$0.483$}  & {$0.157$}  & {$-0.062$}  & {$0.115$}  & {$0.174$}  & {$-0.128$}  & {$0.630$}  \\
 &\cellcolor[gray]{0.9}$R_\text{Bayes}$\tiny{(ours)} & \cellcolor[gray]{0.9}{$\textcolor{colfirst}{\textbf{0.743}}$}  & \cellcolor[gray]{0.9}{$\textcolor{colfirst}{\textbf{0.691}}$}  & \cellcolor[gray]{0.9}{$0.513$}  & \cellcolor[gray]{0.9}{$\textcolor{colfirst}{\textbf{0.312}}$}  & \cellcolor[gray]{0.9}{$\textcolor{colfirst}{\textbf{0.252}}$}  & \cellcolor[gray]{0.9}{$\textcolor{colfirst}{\textbf{0.667}}$}  & \cellcolor[gray]{0.9}{$0.260$}  & \cellcolor[gray]{0.9}{$\textcolor{colfirst}{\textbf{0.251}}$}  & \cellcolor[gray]{0.9}{$\textcolor{colfirst}{\textbf{0.707}}$}  \\
\bottomrule
\end{tabular}

}
\label{tab:uq_prr_bayes}
\end{table*}
\begin{table*}[t]
\centering
\caption{Uncertainty quantification AUC $\uparrow$ for different estimators on all tasks (\textcolor{colfirst}{best} and \textcolor{colsecond}{runner-up}). We measure how well the uncertainty aligns with metrics computed from the Bayes optimal response.}
\small
\setlength{\tabcolsep}{4pt}
\resizebox{\columnwidth}{!}{
\begin{tabular}{ll|cc|c|cc|ccc|c}
\toprule
\multicolumn{2}{c|}{\textbf{Dataset}} & \multicolumn{2}{c|}{MAQA
$\{0, 1\}^K$} & \multicolumn{1}{c|}{TriviaQA
$[K]$} & \multicolumn{2}{c|}{WMT19
$R^d$} & \multicolumn{3}{c|}{CNN/DailyMail
$\mathcal{G}$} & \multicolumn{1}{c}{MAQA
$\Delta^{K-1}$} \\
\multicolumn{2}{c|}{} & Hamming & F1 & Exact Match & COMET & Cosine & Hamming & F1 & Align Score & KL Divergence \\
\midrule
\multirow{8}{*}{\small{Gemma-3-4B}} & $P(\text{True})$ & {$0.557$}  & {$0.548$}  & {$0.609$}  & {$0.537$}  & {$0.484$}  & {$0.468$}  & {$0.504$}  & {$0.459$}  & {$0.465$}  \\
 &SAR & {$0.734$}  & {$0.784$}  & {$0.741$}  & {$0.608$}  & {$\textcolor{colsecond}{\textbf{0.554}}$}  & {$0.449$}  & {$\textcolor{colsecond}{\textbf{0.561}}$}  & {$0.514$}  & {$0.709$}  \\
 &COCOA & {$0.712$}  & {$0.751$}  & {$0.779$}  & {$0.590$}  & {$0.536$}  & {$0.449$}  & {$0.536$}  & {$0.513$}  & {$0.698$}  \\
 &KLE & {$0.762$}  & {$\textcolor{colsecond}{\textbf{0.791}}$}  & {$0.772$}  & {$\textcolor{colfirst}{\textbf{0.609}}$}  & {$0.541$}  & {$\textcolor{colsecond}{\textbf{0.494}}$}  & {$0.505$}  & {$0.522$}  & {$0.708$}  \\
 &SE & {$0.777$}  & {$0.772$}  & {$0.775$}  & {$0.543$}  & {$0.524$}  & {$0.468$}  & {$0.496$}  & {$0.521$}  & {$\textcolor{colsecond}{\textbf{0.724}}$}  \\
 &MSP & {$0.676$}  & {$0.710$}  & {$0.764$}  & {$0.565$}  & {$0.521$}  & {$0.455$}  & {$0.521$}  & {$0.508$}  & {$0.685$}  \\
 &\rebuttal{Latent Entropy} & {$\textcolor{colsecond}{\textbf{0.788}}$}  & {$0.769$}  & {$\textcolor{colfirst}{\textbf{0.784}}$}  & {$0.603$}  & {$0.494$}  & {$0.490$}  & {$0.546$}  & {$\textcolor{colsecond}{\textbf{0.544}}$}  & {$0.721$}  \\
 &\cellcolor[gray]{0.9}$R_\text{Bayes}$\tiny{(ours)} & \cellcolor[gray]{0.9}{$\textcolor{colfirst}{\textbf{0.794}}$}  & \cellcolor[gray]{0.9}{$\textcolor{colfirst}{\textbf{0.803}}$}  & \cellcolor[gray]{0.9}{$\textcolor{colsecond}{\textbf{0.780}}$}  & \cellcolor[gray]{0.9}{$\textcolor{colsecond}{\textbf{0.608}}$}  & \cellcolor[gray]{0.9}{$\textcolor{colfirst}{\textbf{0.565}}$}  & \cellcolor[gray]{0.9}{$\textcolor{colfirst}{\textbf{0.600}}$}  & \cellcolor[gray]{0.9}{$\textcolor{colfirst}{\textbf{0.584}}$}  & \cellcolor[gray]{0.9}{$\textcolor{colfirst}{\textbf{0.580}}$}  & \cellcolor[gray]{0.9}{$\textcolor{colfirst}{\textbf{0.744}}$}  \\
\midrule
\multirow{8}{*}{\small{Gemma-3-12B}} & $P(\text{True})$ & {$0.670$}  & {$0.654$}  & {$0.644$}  & {$0.557$}  & {$0.494$}  & {$0.497$}  & {$0.420$}  & {$0.487$}  & {$0.539$}  \\
 &SAR & {$0.775$}  & {$\textcolor{colsecond}{\textbf{0.789}}$}  & {$0.773$}  & {$0.586$}  & {$\textcolor{colsecond}{\textbf{0.544}}$}  & {$0.461$}  & {$\textcolor{colfirst}{\textbf{0.573}}$}  & {$0.461$}  & {$0.704$}  \\
 &COCOA & {$0.756$}  & {$0.761$}  & {$\textcolor{colfirst}{\textbf{0.812}}$}  & {$0.585$}  & {$0.526$}  & {$0.466$}  & {$0.568$}  & {$0.453$}  & {$0.689$}  \\
 &KLE & {$0.771$}  & {$0.772$}  & {$0.767$}  & {$0.588$}  & {$0.524$}  & {$0.462$}  & {$0.523$}  & {$0.469$}  & {$0.701$}  \\
 &SE & {$0.768$}  & {$0.762$}  & {$0.761$}  & {$0.519$}  & {$0.508$}  & {$\textcolor{colsecond}{\textbf{0.519}}$}  & {$0.563$}  & {$0.486$}  & {$\textcolor{colsecond}{\textbf{0.708}}$}  \\
 &MSP & {$0.716$}  & {$0.719$}  & {$\textcolor{colsecond}{\textbf{0.802}}$}  & {$0.571$}  & {$0.519$}  & {$0.449$}  & {$0.548$}  & {$0.453$}  & {$0.668$}  \\
 &\rebuttal{Latent Entropy} & {$\textcolor{colsecond}{\textbf{0.803}}$}  & {$0.782$}  & {$0.765$}  & {$\textcolor{colfirst}{\textbf{0.600}}$}  & {$0.481$}  & {$0.477$}  & {$0.540$}  & {$\textcolor{colsecond}{\textbf{0.530}}$}  & {$0.704$}  \\
 &\cellcolor[gray]{0.9}$R_\text{Bayes}$\tiny{(ours)} & \cellcolor[gray]{0.9}{$\textcolor{colfirst}{\textbf{0.810}}$}  & \cellcolor[gray]{0.9}{$\textcolor{colfirst}{\textbf{0.800}}$}  & \cellcolor[gray]{0.9}{$0.764$}  & \cellcolor[gray]{0.9}{$\textcolor{colsecond}{\textbf{0.595}}$}  & \cellcolor[gray]{0.9}{$\textcolor{colfirst}{\textbf{0.562}}$}  & \cellcolor[gray]{0.9}{$\textcolor{colfirst}{\textbf{0.580}}$}  & \cellcolor[gray]{0.9}{$\textcolor{colsecond}{\textbf{0.571}}$}  & \cellcolor[gray]{0.9}{$\textcolor{colfirst}{\textbf{0.605}}$}  & \cellcolor[gray]{0.9}{$\textcolor{colfirst}{\textbf{0.753}}$}  \\
\midrule
\multirow{8}{*}{\small{Qwen3-4B}} & $P(\text{True})$ & {$0.590$}  & {$0.568$}  & {$0.640$}  & {$0.525$}  & {$\textcolor{colsecond}{\textbf{0.550}}$}  & {$\textcolor{colsecond}{\textbf{0.495}}$}  & {$0.477$}  & {$\textcolor{colsecond}{\textbf{0.522}}$}  & {$0.555$}  \\
 &SAR & {$0.721$}  & {$0.736$}  & {$0.733$}  & {$0.582$}  & {$0.531$}  & {$0.462$}  & {$0.549$}  & {$0.501$}  & {$0.639$}  \\
 &COCOA & {$0.717$}  & {$0.726$}  & {$\textcolor{colfirst}{\textbf{0.774}}$}  & {$0.594$}  & {$0.537$}  & {$0.461$}  & {$\textcolor{colfirst}{\textbf{0.572}}$}  & {$0.495$}  & {$\textcolor{colfirst}{\textbf{0.639}}$}  \\
 &KLE & {$0.701$}  & {$0.725$}  & {$0.738$}  & {$\textcolor{colsecond}{\textbf{0.596}}$}  & {$0.523$}  & {$0.428$}  & {$0.537$}  & {$0.479$}  & {$0.620$}  \\
 &SE & {$0.712$}  & {$\textcolor{colsecond}{\textbf{0.746}}$}  & {$0.714$}  & {$0.535$}  & {$0.512$}  & {$0.451$}  & {$0.523$}  & {$0.508$}  & {$0.624$}  \\
 &MSP & {$0.692$}  & {$0.694$}  & {$\textcolor{colsecond}{\textbf{0.769}}$}  & {$0.576$}  & {$0.536$}  & {$0.464$}  & {$\textcolor{colsecond}{\textbf{0.563}}$}  & {$0.500$}  & {$\textcolor{colsecond}{\textbf{0.639}}$}  \\
 &\rebuttal{Latent Entropy} & {$\textcolor{colsecond}{\textbf{0.751}}$}  & {$0.732$}  & {$0.736$}  & {$0.584$}  & {$0.487$}  & {$0.483$}  & {$0.544$}  & {$0.511$}  & {$0.614$}  \\
 &\cellcolor[gray]{0.9}$R_\text{Bayes}$\tiny{(ours)} & \cellcolor[gray]{0.9}{$\textcolor{colfirst}{\textbf{0.771}}$}  & \cellcolor[gray]{0.9}{$\textcolor{colfirst}{\textbf{0.809}}$}  & \cellcolor[gray]{0.9}{$0.737$}  & \cellcolor[gray]{0.9}{$\textcolor{colfirst}{\textbf{0.605}}$}  & \cellcolor[gray]{0.9}{$\textcolor{colfirst}{\textbf{0.558}}$}  & \cellcolor[gray]{0.9}{$\textcolor{colfirst}{\textbf{0.525}}$}  & \cellcolor[gray]{0.9}{$0.552$}  & \cellcolor[gray]{0.9}{$\textcolor{colfirst}{\textbf{0.594}}$}  & \cellcolor[gray]{0.9}{$0.639$}  \\
\midrule
\multirow{8}{*}{\small{Qwen3-30B-A3B}} & $P(\text{True})$ & {$0.499$}  & {$0.505$}  & {$0.625$}  & {$0.518$}  & {$0.503$}  & {$\textcolor{colsecond}{\textbf{0.522}}$}  & {$0.533$}  & {$0.442$}  & {$0.529$}  \\
 &SAR & {$0.752$}  & {$\textcolor{colsecond}{\textbf{0.743}}$}  & {$0.739$}  & {$0.584$}  & {$0.530$}  & {$0.484$}  & {$0.599$}  & {$0.479$}  & {$0.689$}  \\
 &COCOA & {$0.748$}  & {$0.738$}  & {$\textcolor{colfirst}{\textbf{0.810}}$}  & {$0.598$}  & {$\textcolor{colsecond}{\textbf{0.541}}$}  & {$0.498$}  & {$\textcolor{colfirst}{\textbf{0.608}}$}  & {$0.466$}  & {$\textcolor{colsecond}{\textbf{0.705}}$}  \\
 &KLE & {$0.733$}  & {$0.719$}  & {$0.736$}  & {$\textcolor{colfirst}{\textbf{0.600}}$}  & {$0.521$}  & {$0.496$}  & {$0.563$}  & {$\textcolor{colsecond}{\textbf{0.507}}$}  & {$0.688$}  \\
 &SE & {$0.742$}  & {$0.724$}  & {$0.728$}  & {$0.517$}  & {$0.508$}  & {$0.414$}  & {$0.525$}  & {$0.466$}  & {$0.700$}  \\
 &MSP & {$0.718$}  & {$0.711$}  & {$\textcolor{colsecond}{\textbf{0.803}}$}  & {$0.579$}  & {$0.535$}  & {$0.486$}  & {$0.594$}  & {$0.473$}  & {$0.681$}  \\
 &\rebuttal{Latent Entropy} & {$\textcolor{colfirst}{\textbf{0.764}}$}  & {$0.734$}  & {$0.735$}  & {$0.589$}  & {$0.485$}  & {$0.482$}  & {$0.597$}  & {$0.483$}  & {$0.698$}  \\
 &\cellcolor[gray]{0.9}$R_\text{Bayes}$\tiny{(ours)} & \cellcolor[gray]{0.9}{$\textcolor{colsecond}{\textbf{0.764}}$}  & \cellcolor[gray]{0.9}{$\textcolor{colfirst}{\textbf{0.752}}$}  & \cellcolor[gray]{0.9}{$0.734$}  & \cellcolor[gray]{0.9}{$\textcolor{colsecond}{\textbf{0.599}}$}  & \cellcolor[gray]{0.9}{$\textcolor{colfirst}{\textbf{0.548}}$}  & \cellcolor[gray]{0.9}{$\textcolor{colfirst}{\textbf{0.577}}$}  & \cellcolor[gray]{0.9}{$\textcolor{colsecond}{\textbf{0.602}}$}  & \cellcolor[gray]{0.9}{$\textcolor{colfirst}{\textbf{0.578}}$}  & \cellcolor[gray]{0.9}{$\textcolor{colfirst}{\textbf{0.709}}$}  \\
\bottomrule
\end{tabular}

}
\label{tab:uq_auc_bayes}
\end{table*}
\begin{table*}[t]
\centering
\caption{Uncertainty quantification PRR $\uparrow$ for different estimators on all tasks (\textcolor{colfirst}{best} and \textcolor{colsecond}{runner-up}). We measure how well the uncertainty aligns with metrics computed from the beam search response.}
\small
\setlength{\tabcolsep}{4pt}
\resizebox{\columnwidth}{!}{
\begin{tabular}{ll|cc|c|cc|ccc|c}
\toprule
\multicolumn{2}{c|}{\textbf{Dataset}} & \multicolumn{2}{c|}{MAQA
$\{0, 1\}^K$} & \multicolumn{1}{c|}{TriviaQA
$[K]$} & \multicolumn{2}{c|}{WMT19
$R^d$} & \multicolumn{3}{c|}{CNN/DailyMail
$\mathcal{G}$} & \multicolumn{1}{c}{MAQA
$\Delta^{K-1}$} \\
\multicolumn{2}{c|}{} & Hamming & F1 & Exact Match & COMET & Cosine & Hamming & F1 & Align Score & KL Divergence \\
\midrule
\multirow{8}{*}{\small{Gemma-3-4B}} & $P(\text{True})$ & {$0.148$}  & {$0.134$}  & {$0.343$}  & {$0.082$}  & {$-0.036$}  & {$-0.182$}  & {$-0.207$}  & {$-0.031$}  & {$-0.147$}  \\
 &SAR & {$0.617$}  & {$0.743$}  & {$0.569$}  & {$\textcolor{colfirst}{\textbf{0.336}}$}  & {$\textcolor{colsecond}{\textbf{0.266}}$}  & {$0.191$}  & {$\textcolor{colfirst}{\textbf{0.388}}$}  & {$0.109$}  & {$\textcolor{colsecond}{\textbf{0.732}}$}  \\
 &COCOA & {$0.581$}  & {$0.688$}  & {$0.573$}  & {$0.286$}  & {$0.210$}  & {$\textcolor{colsecond}{\textbf{0.222}}$}  & {$0.222$}  & {$0.153$}  & {$0.694$}  \\
 &KLE & {$0.675$}  & {$\textcolor{colsecond}{\textbf{0.749}}$}  & {$\textcolor{colsecond}{\textbf{0.697}}$}  & {$0.319$}  & {$0.161$}  & {$0.099$}  & {$0.044$}  & {$0.041$}  & {$0.697$}  \\
 &SE & {$0.776$}  & {$0.719$}  & {$0.666$}  & {$0.146$}  & {$0.079$}  & {$0.174$}  & {$0.199$}  & {$0.050$}  & {$0.703$}  \\
 &MSP & {$0.450$}  & {$0.571$}  & {$0.512$}  & {$0.161$}  & {$0.120$}  & {$0.110$}  & {$0.106$}  & {$0.175$}  & {$0.645$}  \\
 &\rebuttal{Latent Entropy} & {$\textcolor{colsecond}{\textbf{0.804}}$}  & {$0.705$}  & {$\textcolor{colfirst}{\textbf{0.713}}$}  & {$0.232$}  & {$-0.006$}  & {$0.137$}  & {$0.170$}  & {$\textcolor{colsecond}{\textbf{0.240}}$}  & {$0.682$}  \\
 &\cellcolor[gray]{0.9}$R_\text{Bayes}$\tiny{(ours)} & \cellcolor[gray]{0.9}{$\textcolor{colfirst}{\textbf{0.819}}$}  & \cellcolor[gray]{0.9}{$\textcolor{colfirst}{\textbf{0.811}}$}  & \cellcolor[gray]{0.9}{$0.674$}  & \cellcolor[gray]{0.9}{$\textcolor{colsecond}{\textbf{0.327}}$}  & \cellcolor[gray]{0.9}{$\textcolor{colfirst}{\textbf{0.282}}$}  & \cellcolor[gray]{0.9}{$\textcolor{colfirst}{\textbf{0.677}}$}  & \cellcolor[gray]{0.9}{$\textcolor{colsecond}{\textbf{0.355}}$}  & \cellcolor[gray]{0.9}{$\textcolor{colfirst}{\textbf{0.289}}$}  & \cellcolor[gray]{0.9}{$\textcolor{colfirst}{\textbf{0.754}}$}  \\
\midrule
\multirow{8}{*}{\small{Gemma-3-12B}} & $P(\text{True})$ & {$0.420$}  & {$0.383$}  & {$0.250$}  & {$0.151$}  & {$-0.031$}  & {$-0.186$}  & {$-0.031$}  & {$-0.096$}  & {$0.063$}  \\
 &SAR & {$0.697$}  & {$\textcolor{colsecond}{\textbf{0.726}}$}  & {$\textcolor{colfirst}{\textbf{0.589}}$}  & {$\textcolor{colsecond}{\textbf{0.270}}$}  & {$\textcolor{colsecond}{\textbf{0.199}}$}  & {$0.098$}  & {$\textcolor{colsecond}{\textbf{0.207}}$}  & {$\textcolor{colsecond}{\textbf{0.130}}$}  & {$\textcolor{colsecond}{\textbf{0.652}}$}  \\
 &COCOA & {$0.655$}  & {$0.655$}  & {$0.568$}  & {$0.263$}  & {$0.149$}  & {$0.096$}  & {$0.181$}  & {$0.052$}  & {$0.632$}  \\
 &KLE & {$0.670$}  & {$0.686$}  & {$\textcolor{colsecond}{\textbf{0.582}}$}  & {$0.236$}  & {$0.097$}  & {$0.079$}  & {$0.016$}  & {$-0.012$}  & {$0.614$}  \\
 &SE & {$0.721$}  & {$0.672$}  & {$0.566$}  & {$0.081$}  & {$0.035$}  & {$\textcolor{colsecond}{\textbf{0.199}}$}  & {$0.140$}  & {$0.023$}  & {$0.620$}  \\
 &MSP & {$0.504$}  & {$0.512$}  & {$0.539$}  & {$0.180$}  & {$0.105$}  & {$0.013$}  & {$0.184$}  & {$-0.004$}  & {$0.590$}  \\
 &\rebuttal{Latent Entropy} & {$\textcolor{colsecond}{\textbf{0.767}}$}  & {$0.678$}  & {$0.577$}  & {$0.220$}  & {$-0.066$}  & {$0.151$}  & {$0.177$}  & {$0.119$}  & {$0.613$}  \\
 &\cellcolor[gray]{0.9}$R_\text{Bayes}$\tiny{(ours)} & \cellcolor[gray]{0.9}{$\textcolor{colfirst}{\textbf{0.778}}$}  & \cellcolor[gray]{0.9}{$\textcolor{colfirst}{\textbf{0.726}}$}  & \cellcolor[gray]{0.9}{$0.576$}  & \cellcolor[gray]{0.9}{$\textcolor{colfirst}{\textbf{0.316}}$}  & \cellcolor[gray]{0.9}{$\textcolor{colfirst}{\textbf{0.263}}$}  & \cellcolor[gray]{0.9}{$\textcolor{colfirst}{\textbf{0.581}}$}  & \cellcolor[gray]{0.9}{$\textcolor{colfirst}{\textbf{0.258}}$}  & \cellcolor[gray]{0.9}{$\textcolor{colfirst}{\textbf{0.398}}$}  & \cellcolor[gray]{0.9}{$\textcolor{colfirst}{\textbf{0.711}}$}  \\
\midrule
\multirow{8}{*}{\small{Qwen3-4B}} & $P(\text{True})$ & {$0.378$}  & {$0.237$}  & {$0.508$}  & {$0.221$}  & {$\textcolor{colsecond}{\textbf{0.209}}$}  & {$0.050$}  & {$0.014$}  & {$0.128$}  & {$0.124$}  \\
 &SAR & {$0.464$}  & {$0.590$}  & {$0.519$}  & {$0.235$}  & {$0.177$}  & {$\textcolor{colsecond}{\textbf{0.299}}$}  & {$\textcolor{colsecond}{\textbf{0.200}}$}  & {$0.140$}  & {$\textcolor{colsecond}{\textbf{0.516}}$}  \\
 &COCOA & {$0.398$}  & {$0.569$}  & {$0.516$}  & {$\textcolor{colsecond}{\textbf{0.276}}$}  & {$0.199$}  & {$0.231$}  & {$\textcolor{colfirst}{\textbf{0.252}}$}  & {$\textcolor{colsecond}{\textbf{0.153}}$}  & {$0.498$}  \\
 &KLE & {$0.286$}  & {$0.560$}  & {$0.573$}  & {$0.249$}  & {$0.125$}  & {$0.191$}  & {$0.150$}  & {$0.006$}  & {$0.444$}  \\
 &SE & {$0.391$}  & {$\textcolor{colsecond}{\textbf{0.644}}$}  & {$0.547$}  & {$0.143$}  & {$0.063$}  & {$0.131$}  & {$0.046$}  & {$0.016$}  & {$0.465$}  \\
 &MSP & {$0.328$}  & {$0.483$}  & {$0.510$}  & {$0.200$}  & {$0.184$}  & {$0.070$}  & {$0.187$}  & {$0.084$}  & {$0.498$}  \\
 &\rebuttal{Latent Entropy} & {$\textcolor{colsecond}{\textbf{0.585}}$}  & {$0.597$}  & {$\textcolor{colsecond}{\textbf{0.612}}$}  & {$0.147$}  & {$-0.046$}  & {$0.154$}  & {$0.021$}  & {$0.019$}  & {$0.492$}  \\
 &\cellcolor[gray]{0.9}$R_\text{Bayes}$\tiny{(ours)} & \cellcolor[gray]{0.9}{$\textcolor{colfirst}{\textbf{0.776}}$}  & \cellcolor[gray]{0.9}{$\textcolor{colfirst}{\textbf{0.788}}$}  & \cellcolor[gray]{0.9}{$\textcolor{colfirst}{\textbf{0.631}}$}  & \cellcolor[gray]{0.9}{$\textcolor{colfirst}{\textbf{0.309}}$}  & \cellcolor[gray]{0.9}{$\textcolor{colfirst}{\textbf{0.272}}$}  & \cellcolor[gray]{0.9}{$\textcolor{colfirst}{\textbf{0.487}}$}  & \cellcolor[gray]{0.9}{$0.152$}  & \cellcolor[gray]{0.9}{$\textcolor{colfirst}{\textbf{0.272}}$}  & \cellcolor[gray]{0.9}{$\textcolor{colfirst}{\textbf{0.569}}$}  \\
\midrule
\multirow{8}{*}{\small{Qwen3-30B-A3B}} & $P(\text{True})$ & {$0.024$}  & {$0.042$}  & {$0.261$}  & {$0.026$}  & {$-0.014$}  & {$-0.070$}  & {$0.013$}  & {$0.001$}  & {$0.180$}  \\
 &SAR & {$0.664$}  & {$\textcolor{colsecond}{\textbf{0.644}}$}  & {$0.503$}  & {$0.247$}  & {$0.172$}  & {$\textcolor{colsecond}{\textbf{0.343}}$}  & {$\textcolor{colfirst}{\textbf{0.471}}$}  & {$0.012$}  & {$\textcolor{colsecond}{\textbf{0.672}}$}  \\
 &COCOA & {$0.667$}  & {$0.632$}  & {$\textcolor{colfirst}{\textbf{0.545}}$}  & {$\textcolor{colsecond}{\textbf{0.285}}$}  & {$\textcolor{colsecond}{\textbf{0.185}}$}  & {$0.299$}  & {$\textcolor{colsecond}{\textbf{0.404}}$}  & {$0.032$}  & {$0.670$}  \\
 &KLE & {$0.610$}  & {$0.613$}  & {$0.506$}  & {$0.278$}  & {$0.117$}  & {$0.217$}  & {$0.322$}  & {$-0.115$}  & {$0.627$}  \\
 &SE & {$0.648$}  & {$0.629$}  & {$0.500$}  & {$0.063$}  & {$0.048$}  & {$0.225$}  & {$0.344$}  & {$-0.078$}  & {$0.652$}  \\
 &MSP & {$0.549$}  & {$0.557$}  & {$\textcolor{colsecond}{\textbf{0.521}}$}  & {$0.190$}  & {$0.151$}  & {$0.271$}  & {$0.300$}  & {$\textcolor{colsecond}{\textbf{0.119}}$}  & {$0.615$}  \\
 &\rebuttal{Latent Entropy} & {$\textcolor{colsecond}{\textbf{0.701}}$}  & {$0.617$}  & {$0.483$}  & {$0.157$}  & {$-0.062$}  & {$0.115$}  & {$0.174$}  & {$-0.128$}  & {$0.630$}  \\
 &\cellcolor[gray]{0.9}$R_\text{Bayes}$\tiny{(ours)} & \cellcolor[gray]{0.9}{$\textcolor{colfirst}{\textbf{0.743}}$}  & \cellcolor[gray]{0.9}{$\textcolor{colfirst}{\textbf{0.691}}$}  & \cellcolor[gray]{0.9}{$0.513$}  & \cellcolor[gray]{0.9}{$\textcolor{colfirst}{\textbf{0.312}}$}  & \cellcolor[gray]{0.9}{$\textcolor{colfirst}{\textbf{0.252}}$}  & \cellcolor[gray]{0.9}{$\textcolor{colfirst}{\textbf{0.667}}$}  & \cellcolor[gray]{0.9}{$0.260$}  & \cellcolor[gray]{0.9}{$\textcolor{colfirst}{\textbf{0.251}}$}  & \cellcolor[gray]{0.9}{$\textcolor{colfirst}{\textbf{0.707}}$}  \\
\bottomrule
\end{tabular}

}
\label{tab:uq_prr_beam}
\end{table*}
\begin{table*}[t]
\centering
\caption{Uncertainty quantification AUC $\uparrow$ for different estimators on all tasks (\textcolor{colfirst}{best} and \textcolor{colsecond}{runner-up}). We measure how well the uncertainty aligns with metrics computed from the beam search response.}
\small
\setlength{\tabcolsep}{4pt}
\resizebox{\columnwidth}{!}{
\begin{tabular}{ll|cc|c|cc|ccc|c}
\toprule
\multicolumn{2}{c|}{\textbf{Dataset}} & \multicolumn{2}{c|}{MAQA
$\{0, 1\}^K$} & \multicolumn{1}{c|}{TriviaQA
$[K]$} & \multicolumn{2}{c|}{WMT19
$R^d$} & \multicolumn{3}{c|}{CNN/DailyMail
$\mathcal{G}$} & \multicolumn{1}{c}{MAQA
$\Delta^{K-1}$} \\
\multicolumn{2}{c|}{} & Hamming & F1 & Exact Match & COMET & Cosine & Hamming & F1 & Align Score & KL Divergence \\
\midrule
\multirow{8}{*}{\small{Gemma-3-4B}} & $P(\text{True})$ & {$0.566$}  & {$0.556$}  & {$0.618$}  & {$0.537$}  & {$0.485$}  & {$0.479$}  & {$0.484$}  & {$0.504$}  & {$0.474$}  \\
 &SAR & {$0.764$}  & {$0.802$}  & {$0.738$}  & {$0.606$}  & {$\textcolor{colsecond}{\textbf{0.574}}$}  & {$\textcolor{colsecond}{\textbf{0.594}}$}  & {$\textcolor{colfirst}{\textbf{0.619}}$}  & {$0.574$}  & {$0.718$}  \\
 &COCOA & {$0.747$}  & {$0.773$}  & {$\textcolor{colsecond}{\textbf{0.777}}$}  & {$0.589$}  & {$0.555$}  & {$0.577$}  & {$0.569$}  & {$\textcolor{colsecond}{\textbf{0.589}}$}  & {$0.704$}  \\
 &KLE & {$0.793$}  & {$\textcolor{colsecond}{\textbf{0.802}}$}  & {$0.771$}  & {$\textcolor{colfirst}{\textbf{0.608}}$}  & {$0.559$}  & {$0.539$}  & {$0.510$}  & {$0.510$}  & {$0.705$}  \\
 &SE & {$0.815$}  & {$0.786$}  & {$0.769$}  & {$0.543$}  & {$0.531$}  & {$0.563$}  & {$0.556$}  & {$0.527$}  & {$0.723$}  \\
 &MSP & {$0.704$}  & {$0.726$}  & {$0.761$}  & {$0.563$}  & {$0.535$}  & {$0.526$}  & {$0.519$}  & {$0.574$}  & {$0.689$}  \\
 &\rebuttal{Latent Entropy} & {$\textcolor{colsecond}{\textbf{0.829}}$}  & {$0.787$}  & {$\textcolor{colfirst}{\textbf{0.780}}$}  & {$0.601$}  & {$0.508$}  & {$0.570$}  & {$0.554$}  & {$0.579$}  & {$\textcolor{colsecond}{\textbf{0.723}}$}  \\
 &\cellcolor[gray]{0.9}$R_\text{Bayes}$\tiny{(ours)} & \cellcolor[gray]{0.9}{$\textcolor{colfirst}{\textbf{0.835}}$}  & \cellcolor[gray]{0.9}{$\textcolor{colfirst}{\textbf{0.828}}$}  & \cellcolor[gray]{0.9}{$0.772$}  & \cellcolor[gray]{0.9}{$\textcolor{colsecond}{\textbf{0.608}}$}  & \cellcolor[gray]{0.9}{$\textcolor{colfirst}{\textbf{0.584}}$}  & \cellcolor[gray]{0.9}{$\textcolor{colfirst}{\textbf{0.745}}$}  & \cellcolor[gray]{0.9}{$\textcolor{colsecond}{\textbf{0.617}}$}  & \cellcolor[gray]{0.9}{$\textcolor{colfirst}{\textbf{0.607}}$}  & \cellcolor[gray]{0.9}{$\textcolor{colfirst}{\textbf{0.757}}$}  \\
\midrule
\multirow{8}{*}{\small{Gemma-3-12B}} & $P(\text{True})$ & {$0.673$}  & {$0.657$}  & {$0.645$}  & {$0.557$}  & {$0.495$}  & {$0.432$}  & {$0.458$}  & {$0.474$}  & {$0.534$}  \\
 &SAR & {$0.788$}  & {$\textcolor{colsecond}{\textbf{0.785}}$}  & {$0.775$}  & {$0.587$}  & {$\textcolor{colsecond}{\textbf{0.562}}$}  & {$0.570$}  & {$\textcolor{colsecond}{\textbf{0.576}}$}  & {$0.542$}  & {$0.715$}  \\
 &COCOA & {$0.768$}  & {$0.757$}  & {$\textcolor{colfirst}{\textbf{0.813}}$}  & {$0.586$}  & {$0.542$}  & {$0.563$}  & {$0.561$}  & {$0.526$}  & {$0.699$}  \\
 &KLE & {$0.784$}  & {$0.771$}  & {$0.770$}  & {$0.590$}  & {$0.539$}  & {$0.534$}  & {$0.518$}  & {$0.479$}  & {$0.708$}  \\
 &SE & {$0.783$}  & {$0.762$}  & {$0.763$}  & {$0.518$}  & {$0.513$}  & {$\textcolor{colsecond}{\textbf{0.572}}$}  & {$0.534$}  & {$0.501$}  & {$\textcolor{colsecond}{\textbf{0.718}}$}  \\
 &MSP & {$0.724$}  & {$0.715$}  & {$\textcolor{colsecond}{\textbf{0.804}}$}  & {$0.572$}  & {$0.532$}  & {$0.523$}  & {$0.543$}  & {$0.515$}  & {$0.678$}  \\
 &\rebuttal{Latent Entropy} & {$\textcolor{colsecond}{\textbf{0.814}}$}  & {$0.781$}  & {$0.767$}  & {$\textcolor{colfirst}{\textbf{0.600}}$}  & {$0.493$}  & {$0.557$}  & {$0.551$}  & {$\textcolor{colsecond}{\textbf{0.559}}$}  & {$0.715$}  \\
 &\cellcolor[gray]{0.9}$R_\text{Bayes}$\tiny{(ours)} & \cellcolor[gray]{0.9}{$\textcolor{colfirst}{\textbf{0.816}}$}  & \cellcolor[gray]{0.9}{$\textcolor{colfirst}{\textbf{0.795}}$}  & \cellcolor[gray]{0.9}{$0.766$}  & \cellcolor[gray]{0.9}{$\textcolor{colsecond}{\textbf{0.596}}$}  & \cellcolor[gray]{0.9}{$\textcolor{colfirst}{\textbf{0.579}}$}  & \cellcolor[gray]{0.9}{$\textcolor{colfirst}{\textbf{0.709}}$}  & \cellcolor[gray]{0.9}{$\textcolor{colfirst}{\textbf{0.585}}$}  & \cellcolor[gray]{0.9}{$\textcolor{colfirst}{\textbf{0.660}}$}  & \cellcolor[gray]{0.9}{$\textcolor{colfirst}{\textbf{0.756}}$}  \\
\midrule
\multirow{8}{*}{\small{Qwen3-4B}} & $P(\text{True})$ & {$0.595$}  & {$0.569$}  & {$0.675$}  & {$0.523$}  & {$0.550$}  & {$0.517$}  & {$0.483$}  & {$\textcolor{colsecond}{\textbf{0.539}}$}  & {$0.540$}  \\
 &SAR & {$0.735$}  & {$0.733$}  & {$0.717$}  & {$0.580$}  & {$0.549$}  & {$0.604$}  & {$\textcolor{colsecond}{\textbf{0.563}}$}  & {$0.526$}  & {$\textcolor{colsecond}{\textbf{0.662}}$}  \\
 &COCOA & {$0.733$}  & {$0.729$}  & {$\textcolor{colfirst}{\textbf{0.760}}$}  & {$0.591$}  & {$\textcolor{colsecond}{\textbf{0.556}}$}  & {$0.582$}  & {$\textcolor{colfirst}{\textbf{0.576}}$}  & {$0.525$}  & {$0.658$}  \\
 &KLE & {$0.716$}  & {$0.725$}  & {$0.731$}  & {$\textcolor{colsecond}{\textbf{0.594}}$}  & {$0.539$}  & {$0.570$}  & {$0.555$}  & {$0.492$}  & {$0.639$}  \\
 &SE & {$0.731$}  & {$\textcolor{colsecond}{\textbf{0.747}}$}  & {$0.706$}  & {$0.533$}  & {$0.517$}  & {$0.577$}  & {$0.548$}  & {$0.504$}  & {$0.639$}  \\
 &MSP & {$0.707$}  & {$0.698$}  & {$\textcolor{colsecond}{\textbf{0.757}}$}  & {$0.573$}  & {$0.553$}  & {$0.546$}  & {$0.550$}  & {$0.520$}  & {$0.655$}  \\
 &\rebuttal{Latent Entropy} & {$\textcolor{colsecond}{\textbf{0.770}}$}  & {$0.733$}  & {$0.724$}  & {$0.584$}  & {$0.502$}  & {$\textcolor{colsecond}{\textbf{0.613}}$}  & {$0.554$}  & {$0.531$}  & {$0.639$}  \\
 &\cellcolor[gray]{0.9}$R_\text{Bayes}$\tiny{(ours)} & \cellcolor[gray]{0.9}{$\textcolor{colfirst}{\textbf{0.791}}$}  & \cellcolor[gray]{0.9}{$\textcolor{colfirst}{\textbf{0.807}}$}  & \cellcolor[gray]{0.9}{$0.730$}  & \cellcolor[gray]{0.9}{$\textcolor{colfirst}{\textbf{0.603}}$}  & \cellcolor[gray]{0.9}{$\textcolor{colfirst}{\textbf{0.578}}$}  & \cellcolor[gray]{0.9}{$\textcolor{colfirst}{\textbf{0.672}}$}  & \cellcolor[gray]{0.9}{$0.556$}  & \cellcolor[gray]{0.9}{$\textcolor{colfirst}{\textbf{0.592}}$}  & \cellcolor[gray]{0.9}{$\textcolor{colfirst}{\textbf{0.666}}$}  \\
\midrule
\multirow{8}{*}{\small{Qwen3-30B-A3B}} & $P(\text{True})$ & {$0.493$}  & {$0.506$}  & {$0.636$}  & {$0.517$}  & {$0.506$}  & {$0.515$}  & {$0.514$}  & {$0.499$}  & {$0.541$}  \\
 &SAR & {$0.764$}  & {$\textcolor{colsecond}{\textbf{0.748}}$}  & {$0.732$}  & {$0.583$}  & {$0.548$}  & {$0.625$}  & {$\textcolor{colfirst}{\textbf{0.657}}$}  & {$0.510$}  & {$0.716$}  \\
 &COCOA & {$0.760$}  & {$0.737$}  & {$\textcolor{colfirst}{\textbf{0.809}}$}  & {$0.597$}  & {$\textcolor{colsecond}{\textbf{0.556}}$}  & {$\textcolor{colsecond}{\textbf{0.628}}$}  & {$\textcolor{colsecond}{\textbf{0.645}}$}  & {$0.512$}  & {$\textcolor{colsecond}{\textbf{0.721}}$}  \\
 &KLE & {$0.746$}  & {$0.724$}  & {$0.729$}  & {$\textcolor{colfirst}{\textbf{0.599}}$}  & {$0.537$}  & {$0.589$}  & {$0.570$}  & {$0.502$}  & {$0.708$}  \\
 &SE & {$0.754$}  & {$0.729$}  & {$0.723$}  & {$0.517$}  & {$0.512$}  & {$0.576$}  & {$0.573$}  & {$0.475$}  & {$0.720$}  \\
 &MSP & {$0.729$}  & {$0.712$}  & {$\textcolor{colsecond}{\textbf{0.801}}$}  & {$0.577$}  & {$0.547$}  & {$0.605$}  & {$0.592$}  & {$\textcolor{colsecond}{\textbf{0.517}}$}  & {$0.690$}  \\
 &\rebuttal{Latent Entropy} & {$\textcolor{colsecond}{\textbf{0.774}}$}  & {$0.736$}  & {$0.726$}  & {$0.588$}  & {$0.499$}  & {$0.586$}  & {$0.590$}  & {$0.507$}  & {$0.719$}  \\
 &\cellcolor[gray]{0.9}$R_\text{Bayes}$\tiny{(ours)} & \cellcolor[gray]{0.9}{$\textcolor{colfirst}{\textbf{0.777}}$}  & \cellcolor[gray]{0.9}{$\textcolor{colfirst}{\textbf{0.756}}$}  & \cellcolor[gray]{0.9}{$0.732$}  & \cellcolor[gray]{0.9}{$\textcolor{colsecond}{\textbf{0.599}}$}  & \cellcolor[gray]{0.9}{$\textcolor{colfirst}{\textbf{0.570}}$}  & \cellcolor[gray]{0.9}{$\textcolor{colfirst}{\textbf{0.732}}$}  & \cellcolor[gray]{0.9}{$0.586$}  & \cellcolor[gray]{0.9}{$\textcolor{colfirst}{\textbf{0.603}}$}  & \cellcolor[gray]{0.9}{$\textcolor{colfirst}{\textbf{0.743}}$}  \\
\bottomrule
\end{tabular}

}
\label{tab:uq_auc_beam}
\end{table*}

\subsection{Decoding}
\label{app:entropy_performance}
In \Cref{fig:app_multianswer,fig:app_mt,fig:app_simplex}, we illustrate the impact of latent entropy on decoding performance relative to \(\ell_{\mathrm{Bayes}}\) across models for Multi-Answer QA, Machine Translation, and Multi-Answer QA with simplex-structured outputs. The observed trends are consistent with the findings in \Cref{sec:experiments_decoding} and, importantly, generalize across model families and output structures.

\begin{figure*}[t!]
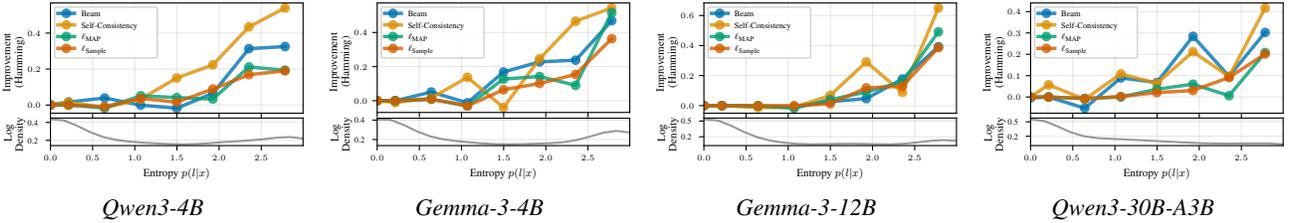

    \centering
    
    \begin{subfigure}[t]{0.24\textwidth}
        \centering
        \resizebox{0.98\textwidth}{!}{\input{figures/entropy_performance/hamming_distance_improvement_MAQA_methods_Qwen3-4B.pgf}}
        \captionsetup{labelformat=empty}
        \caption{\textit{Qwen3-4B}}
    \end{subfigure}
    \hfill
    \begin{subfigure}[t]{0.24\textwidth}
        \centering
        \resizebox{0.98\textwidth}{!}{\input{figures/entropy_performance/hamming_distance_improvement_MAQA_methods_Gemma-3-4B.pgf}}
        \captionsetup{labelformat=empty}
        \caption{\textit{Gemma-3-4B}}
    \end{subfigure}
    \hfill
    \begin{subfigure}[t]{0.24\textwidth}
        \centering
        \resizebox{0.98\textwidth}{!}{\input{figures/entropy_performance/hamming_distance_improvement_MAQA_methods_Gemma-3-12B.pgf}}
        \captionsetup{labelformat=empty}
        \caption{\textit{Gemma-3-12B}}
    \end{subfigure}
    \hfill
    \begin{subfigure}[t]{0.24\textwidth}
        \centering
        \resizebox{0.98\textwidth}{!}{\input{figures/entropy_performance/hamming_distance_improvement_MAQA_methods_Qwen3-30B-A3B.pgf}}
        \captionsetup{labelformat=empty}
        \caption{\textit{Qwen3-30B-A3B}}
    \end{subfigure}
    
    \caption{Improvement of \(\ell_{bayes}\) across increasing latent entropy on multi-answer QA.}
    \label{fig:app_multianswer}
    \vspace{-1em}
\end{figure*}

\begin{figure*}[t!]
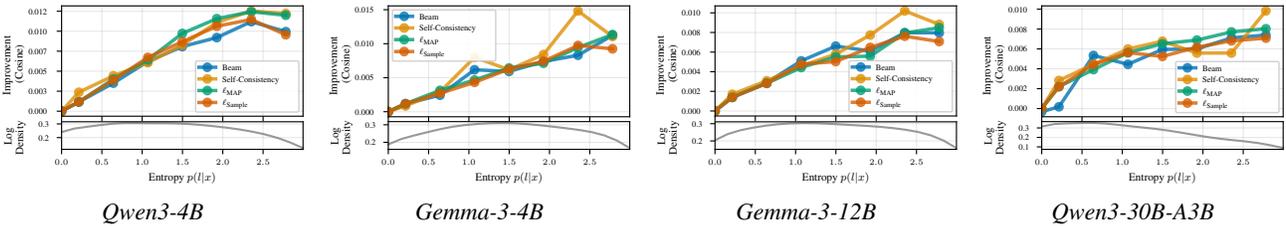

    \centering
    
    \begin{subfigure}[t]{0.24\textwidth}
        \centering
        \resizebox{0.98\textwidth}{!}{\input{figures/entropy_performance/cosine_distance_improvement_WMT19_FIEN_methods_Qwen3-4B.pgf}}
        \captionsetup{labelformat=empty}
        \caption{\textit{Qwen3-4B}}
    \end{subfigure}
    \hfill
    \begin{subfigure}[t]{0.24\textwidth}
        \centering
        \resizebox{0.98\textwidth}{!}{\input{figures/entropy_performance/cosine_distance_improvement_WMT19_FIEN_methods_Gemma-3-4B.pgf}}
        \captionsetup{labelformat=empty}
        \caption{\textit{Gemma-3-4B}}
    \end{subfigure}
    \hfill
    \begin{subfigure}[t]{0.24\textwidth}
        \centering
        \resizebox{0.98\textwidth}{!}{\input{figures/entropy_performance/cosine_distance_improvement_WMT19_FIEN_methods_Gemma-3-12B.pgf}}
        \captionsetup{labelformat=empty}
        \caption{\textit{Gemma-3-12B}}
    \end{subfigure}
    \hfill
    \begin{subfigure}[t]{0.24\textwidth}
        \centering
        \resizebox{0.98\textwidth}{!}{\input{figures/entropy_performance/cosine_distance_improvement_WMT19_FIEN_methods_Qwen3-30B-A3B.pgf}}
        \captionsetup{labelformat=empty}
        \caption{\textit{Qwen3-30B-A3B}}
    \end{subfigure}
    
    \caption{Improvement of \(\ell_{bayes}\) across increasing latent entropy on Machine Translation.}
    \label{fig:app_mt}
    \vspace{-1em}
\end{figure*}

\begin{figure*}[t!]
    \centering
    
    \begin{subfigure}[t]{0.24\textwidth}
        \centering
        \resizebox{0.98\textwidth}{!}{\input{figures/entropy_performance/kl_divergence_improvement_MAQA_SIMPLEX_methods_Qwen3-4B.pgf}}
        \captionsetup{labelformat=empty}
        \caption{\textit{Qwen3-4B}}
    \end{subfigure}
    \hfill
    \begin{subfigure}[t]{0.24\textwidth}
        \centering
        \resizebox{0.98\textwidth}{!}{\input{figures/entropy_performance/kl_divergence_improvement_MAQA_SIMPLEX_methods_Gemma-3-4B.pgf}}
        \captionsetup{labelformat=empty}
        \caption{\textit{Gemma-3-4B}}
    \end{subfigure}
    \hfill
    \begin{subfigure}[t]{0.24\textwidth}
        \centering
        \resizebox{0.98\textwidth}{!}{\input{figures/entropy_performance/kl_divergence_improvement_MAQA_SIMPLEX_methods_Gemma-3-12B.pgf}}
        \captionsetup{labelformat=empty}
        \caption{\textit{Gemma-3-12B}}
    \end{subfigure}
    \hfill
    \begin{subfigure}[t]{0.24\textwidth}
        \centering
        \resizebox{0.98\textwidth}{!}{\input{figures/entropy_performance/kl_divergence_improvement_MAQA_SIMPLEX_methods_Qwen3-30B-A3B.pgf}}
        \captionsetup{labelformat=empty}
        \caption{\textit{Qwen3-30B-A3B}}
    \end{subfigure}
    
    \caption{Improvement of \(\ell_{bayes}\) across increasing latent entropy on multi-answer QA (Simplex).}
    \label{fig:app_simplex}
    \vspace{-1em}
\end{figure*}

\subsection{\rebuttal{Larger LLMs}}

\rebuttal{
We also study decoding and uncertainty estimation on a large backbone LLM, namely Qwen-2.5-72B \cite{qwen3technicalreport}. In \Cref{tab:performance_generations_72b} we find that our approach also improves generation quality for this model, consistent with \Cref{tab:performance_generations}. Further, in \Cref{tab:uq_72b}, we report PRR for uncertainty quantification using Bayes Risk versus other methods using the same Qwen-2.5-72B model. Again, we find that on most tasks with more complicated structure $\mathcal{L}$, our method yields high quality uncertainty estimates. Overall, \Cref{tab:performance_generations_72b,tab:uq_72b} verify that our method indeed also applies to larger LLMs and provides high quality generations and uncertainty estiamtes.
}

\begin{table*}[t!]
\centering
\caption{Generation quality of structure-aware MBR (ours) versus other decoding strategies, including the sampled LLM response with minimal Bayes risk $\ell_\text{Sample}$  ({\textcolor{colfirst}{\textbf{best}}}) using the Qwen-2.5-72B model.}
\small
\setlength{\tabcolsep}{4pt}
\resizebox{\columnwidth}{!}{
\begin{tabular}{ll|cc|c|cc|ccc|c}
\toprule
\multicolumn{2}{c|}{\textbf{Dataset} (Latent $\mathcal{L}$)} & \multicolumn{2}{c|}{MAQA ($\{0, 1\}^K$)} & \multicolumn{1}{c|}{TriviaQA ($[K]$)} & \multicolumn{2}{c|}{\makecell[c]{WMT19 ($\mathbb{R}^d$)}} & \multicolumn{3}{c|}{Summarization ($\mathcal{G}$)} & \multicolumn{1}{c}{MAQA ($\Delta^{K-1}$)} \\
\multicolumn{2}{c|}{} & \small{Hamming $\downarrow$} & \small{F1 $\uparrow$} & \small{Exact Match $\downarrow$} & \small{COMET $\uparrow$} & \small{Cosine $\downarrow$} & \small{Hamming $\downarrow$} & \small{F1 $\uparrow$} & \small{Align Score $\uparrow$} & \small{KL $\downarrow$} \\
\midrule
\multirow{6}{*}{\rebuttal{\small{Qwen-2.5-72B}}} & Beam & {$0.554$}  & {$0.748$}  & {$\textcolor{colfirst}{\textbf{0.301}}$}  & {$0.864$}  & {$0.110$}  & {$2.123$}  & {$0.179$}  & {$0.855$}  & {$7.968$}  \\
&\small{Self-Consistency} & {$0.586$}  & {$0.740$}  & {$0.305$}  & {$\textcolor{colfirst}{\textbf{0.865}}$}  & {$0.109$}  & {$2.047$}  & {$0.180$}  & {$0.757$}  & {$7.302$}  \\
&AR & {$0.729$}  & {$0.740$}  & {$0.308$}  & {$0.864$}  & {$0.109$}  & {$2.094$}  & {$0.197$}  & n.a.  & {$7.680$}  \\
&$\ell_\text{MAP}$ & {$0.521$}  & {$0.747$}  & \cellcolor[gray]{0.9}  & {$\textcolor{colfirst}{\textbf{0.865}}$}  & {$0.109$}  & {$1.955$}  & {$0.181$}  & {$0.838$}  & {$7.770$}  \\
&$\ell_\text{Sample}$ & {$0.495$}  & {$0.752$}  & \cellcolor[gray]{0.9}  & \cellcolor[gray]{0.9}  & {$0.109$}  & {$1.866$}  & {$0.196$}  & {$0.862$}  & {$8.191$}  \\
 & \cellcolor[gray]{0.9}Bayes \tiny{(ours)} & \cellcolor[gray]{0.9}{$\textcolor{colfirst}{\textbf{0.477}}$}  & \cellcolor[gray]{0.9}{$\textcolor{colfirst}{\textbf{0.755}}$}  & \multirow{-3}{*}{\cellcolor[gray]{0.9}{$0.308$}}  & \multirow{-2}{*}{\cellcolor[gray]{0.9}{$0.863$}}  & \cellcolor[gray]{0.9}{$\textcolor{colfirst}{\textbf{0.106}}$}  & \cellcolor[gray]{0.9}{$\textcolor{colfirst}{\textbf{1.614}}$}  & \cellcolor[gray]{0.9}{$\textcolor{colfirst}{\textbf{0.213}}$}  & \cellcolor[gray]{0.9}{$\textcolor{colfirst}{\textbf{0.880}}$}  & \cellcolor[gray]{0.9}{$\textcolor{colfirst}{\textbf{4.823}}$}  \\
\bottomrule
\end{tabular}

}
\label{tab:performance_generations_72b}
\end{table*}

\begin{table*}[t]
\centering
\caption{Uncertainty quantification PRR $\uparrow$ for different estimators on all tasks (\textcolor{colfirst}{best} and \textcolor{colsecond}{runner-up}). We measure how well the uncertainty aligns with metrics computed from the Bayes optimal response. The backbone model is Qwen-2.5-72B.}
\small
\setlength{\tabcolsep}{4pt}
\resizebox{\columnwidth}{!}{
\begin{tabular}{ll|cc|c|cc|ccc|c}
\toprule
\multicolumn{2}{c|}{\textbf{Dataset}} & \multicolumn{2}{c|}{MAQA
$\{0, 1\}^K$} & \multicolumn{1}{c|}{TriviaQA
$[K]$} & \multicolumn{2}{c|}{WMT19
$R^d$} & \multicolumn{3}{c|}{CNN/DailyMail
$\mathcal{G}$} & \multicolumn{1}{c}{MAQA
$\Delta^{K-1}$} \\
\multicolumn{2}{c|}{} & Hamming & F1 & Exact Match & COMET & Cosine & Hamming & F1 & Align Score & KL Divergence \\
\midrule
\multirow{8}{*}{\rebuttal{\small{Qwen-2.5-72B}}} & {{Latent Entropy}} & {$\textcolor{colsecond}{\textbf{0.769}}$}  & {$0.682$}  & {$0.487$}  & {$0.181$}  & {$-0.134$}  & {$\textcolor{colsecond}{\textbf{0.046}}$}  & {$0.114$}  & {$-0.014$}  & {$0.390$}  \\
 &$P(\text{True})$ & {$0.329$}  & {$0.319$}  & {$0.388$}  & {$0.210$}  & {$\textcolor{colsecond}{\textbf{0.127}}$}  & {$-0.018$}  & {$-0.054$}  & {$0.003$}  & {$0.189$}  \\
 &SAR & {$0.715$}  & {$\textcolor{colsecond}{\textbf{0.720}}$}  & {$0.481$}  & {$0.225$}  & {$0.045$}  & {$-0.050$}  & {$0.153$}  & {$-0.093$}  & {$0.391$}  \\
 &COCOA & {$0.654$}  & {$0.705$}  & {$\textcolor{colfirst}{\textbf{0.551}}$}  & {$\textcolor{colsecond}{\textbf{0.274}}$}  & {$0.079$}  & {$-0.011$}  & {$0.048$}  & {$\textcolor{colsecond}{\textbf{0.092}}$}  & {$\textcolor{colfirst}{\textbf{0.448}}$}  \\
 &KLE & {$0.669$}  & {$0.687$}  & {$0.479$}  & {$\textcolor{colfirst}{\textbf{0.301}}$}  & {$0.068$}  & {$-0.048$}  & {$\textcolor{colsecond}{\textbf{0.198}}$}  & {$-0.085$}  & {$0.393$}  \\
 &SE & {$0.641$}  & {$0.619$}  & {$0.480$}  & {$0.244$}  & {$\textcolor{colfirst}{\textbf{0.167}}$}  & {$-0.037$}  & {$0.105$}  & {$0.019$}  & {$0.382$}  \\
 &MSP & {$0.501$}  & {$0.612$}  & {$\textcolor{colsecond}{\textbf{0.514}}$}  & {$0.226$}  & {$0.098$}  & {$-0.022$}  & {$-0.016$}  & {$\textcolor{colfirst}{\textbf{0.108}}$}  & {$0.439$}  \\
 &\cellcolor[gray]{0.9}$R_\text{Bayes}$\tiny{(ours)} & \cellcolor[gray]{0.9}{$\textcolor{colfirst}{\textbf{0.785}}$}  & \cellcolor[gray]{0.9}{$\textcolor{colfirst}{\textbf{0.742}}$}  & \cellcolor[gray]{0.9}{$0.475$}  & \cellcolor[gray]{0.9}{$0.255$}  & \cellcolor[gray]{0.9}{$0.096$}  & \cellcolor[gray]{0.9}{$\textcolor{colfirst}{\textbf{0.223}}$}  & \cellcolor[gray]{0.9}{$\textcolor{colfirst}{\textbf{0.330}}$}  & \cellcolor[gray]{0.9}{$0.047$}  & \cellcolor[gray]{0.9}{$\textcolor{colsecond}{\textbf{0.446}}$}  \\
\bottomrule
\end{tabular}

}
\label{tab:uq_72b}
\end{table*}

\section{Implementation Details}
\begin{table*}[t]
\centering
\caption{Examples of tasks, raw string generations, and induced latent representations. The mapping $g$ converts model outputs into structured latent spaces on which Bayes-optimal decoding is performed.}
\small
\setlength{\tabcolsep}{4pt}
\resizebox{\columnwidth}{!}{
\begin{tabular}{
    p{0.18\textwidth}
    p{0.3\textwidth}
    p{0.3\textwidth}
    p{0.34\textwidth}
}
\toprule
\textbf{Latent space $\mathcal{L}$} 
& \textbf{Example question} 
& \textbf{Model output $y$} 
& \textbf{Latent representation $\ell = g(y)$} \\
\midrule

Classes $[K]$ 
& What is the capital of France? 
& Paris
& \texttt{Paris} \\[4pt]

Sets $\{0,1\}^K$ 
& Which countries border the USA? 
& Canada and Mexico.
& \(\{\underset{Mexico}{1},\underset{.}{0},\underset{Canada}{1},\underset{(.)}{0}\}\)\\[4pt]

Real values $\mathbb{R}$ 
& What is the population of New York City (millions)? 
& Eight and a half million 
& \(8.500.000\)\\[4pt]

Graphs $\mathcal{G} \cong \{0,1\}^E$ 
& What do you know about Paris?
& Paris is the capital of France. [...]
& \(V:\{\text{Paris},\text{France},...\}\quad E:\{\text{capital of},...\}\)\\[4pt]

Probability simplex $\Delta^K$ 
& With what probability will it be sunny,
cloudy, or rainy tomorrow?
& With 80\% probability it will be sunny
tomorrow, with 15\% cloudy, and with 5\% rainy.
& \([0.8,0.15,0.05]^T\in\Delta^2\) \\[4pt]

Semantic embeddings 
$\{\ell \in \mathbb{R}^d : \|\ell\|_2=1\}$ 
& Istuntokauden uudelleenavaaminen 
& Resumption of the session
& $[0.217, -0.345, 0.829, 0.673, ...]^T \in \mathbb{S}^{d}$ \\[4pt]

\bottomrule
\end{tabular}
}
\label{tab:latent_examples_generation}
\end{table*}

\subsection{Mapping Language Outputs to Latent Structures}\label{app:postprocessing}

Here, we detail how we realize the mappings $g_T$ to free-form text LLM responses into the corresponding latent structures. As our core assumption is that the LLM outputs free-form text that is embedded in the latent structure, i.e. the LLM is able to reasonable address the task we prompt it with, we discard answers that can not properly parsed through any mapping $g_T$. These occurrences are extremely rare as the instruction-tuned LLMs studied in this work respect the instruction prompts sufficiently well.

\paragraph{Single-Answer Question Answering}

Similar to \citet{kuhn2023semanticuncertaintylinguisticinvariances}, we use DeBERTa \cite{he2021deberta} as an entailment model to group the individual free-form text responses into semantic clusters that form the support of the domain over which the most frequent answer is selected.

\paragraph{Multi-Answer Question Answering}
Given an answer string \(s\), we want to extract all \emph{semantically distinct} answers in that string. As an example to the question ``Which ocean borders the USA? `` the string \(s=\)`` Both the Atlantic and Pacific Ocean border the USA. ``gets mapped to \(\{\texttt{Atlantic}, \texttt{Pacific}\}\) by \(g_T\). Concretely \(g_T\) is implemented using \texttt{gpt-4.1-mini} with prompt \ref{lst:set_prompt_gt}.

\paragraph{Semantic Space Embedding}

We pass the LLM response $s \mid x$ into a semantic embedding space by directly feeding it into an EmbeddingGemma \cite{embedding_gemma_2025}. To compute the COMET score, we need to map the embedding back to natural language. Since averaged embeddings do not have a clear free-form text correspondence, we instead use the embedding with the closest distance from the sampled responses.

\paragraph{Knowledge Graphs}

We use KGGen \cite{mo2025kggenextractingknowledgegraphs} to extract knowledge graphs from text summaries $s \mid x$. The extracted relations $(u, e, v)$ between entities $u$ and $v$ are then used to induce the set of relevant entities and the graph to finally obtain $G = (V, E)$. We use clustering implemented by KGGen to group the entities and edges. Lastly, we compute the AlignScore corresponding to summary in form of a knowledge graph by using an inverse mapping $g_T^{-1}$ which simply joins all relations in a graph on fullstops to retrieve a summary back from the graph representation.

\paragraph{Distributions (Probability Simplex)}

We first prompt the LLM to output answers and probabilities in a JSON-like format (see \Cref{app:prompts} that can be parsed programatically. Afterwards, we again cluster the invididual answers using an entailment DeBERTa \cite{he2021deberta} and aggregate the corresponding probabilities.

\subsection{Bayes-optimal Action for Sets Under the $F_\beta$ Loss}\label{app:fbeta_loss}
Set-based responses can also be compared through the $F_\beta$ loss, which generalizes the $F_1$ loss to achieve different trade-offs between recall and precision. Intuitively, it treats outputting the correct response set $\ell$ as a binary classification problem over all $K$ classes. While there is no closed-form solution, \citet{waegeman2015bayesoptimalityfmeasuremaximizers} propose an efficient algorithm to compute the Bayesian optimal action. The corresponding risk can be approximated using the definition from \Cref{eq:bayes_risk} and Monte-Carlo sampling. 

\subsection{Evaluation Metrics}

We compute the Bayes-optimal action in closed-form from \Cref{eq:bayes_action} over $M=20$ samples from the LLM embedded into the corresponding latent structure using $g_T$. For the Hamming-loss, to compare the hamming distance across sets of different support sizes, we always report the \emph{normalized} Hamming loss as $d(\ell, \ell^\prime) = \frac{1}{K^*}\sum_{k=1}^K \mathbf{1}[\ell_i \neq \ell^\prime_i]$, where $K^* = \lVert \ell^* \rVert$ is the size of the true answer set. For the F1-score, we treat both the predicted set $\ell_\text{Bayes}$ and the true set $\ell^*$ as datasets of binary instances (i.e. if an element is included in the set or not) and then compute the geometric mean of precision and recall. For knowledge graphs, the structural Hamming distance corresponds directly to the Hamming distance on the set of graph edges and is subject to the same normalization as the set-based latent structure. We compute the AlignScore \cite{zha2023alignscoreevaluatingfactualconsistency} of a knowledge graph between the reassembled free-form text summaries and a true reference summary. 

For uncertainty quantification, we use PRR at a maximum rejection threshold of 50\%, following \cite{vashurin2025uncertaintyquantificationllmsminimum} to avoid artificial inflation of the score. The concordance \(AUC\) is an estimate of \(\mathbb{P}(y_i > y_j \mid \ \hat{y}_i > \hat{y}_j)\). We follow \cite{therneau2024concordance} and discard ties on \(y\), i.e. \(y_i = y_j\) and treat ties on the estimator, i.e. \(y_i = y_j\) with a score of \(\frac{1}{2}\). The value of the resulting score can be interpreted similarly to the traditional AUCROC, with 0.5 corresponding to random chance and 1 to perfect ranking ability.

\subsection{Uncertainty Estimators}
We implement uncertainty estimators based on \texttt{lm-polygraph 0.5.0} using the default settings as described by \cite{vashurin-etal-2025-benchmarking}. For COCOA, we utilize the version that leverages maximum sentence probability, referred to as \(\text{COCOA}_{\text{MSP}}\). Moreover, as both COCOA and MSP are sample-specific, they are computed from the beam search output. Naturally, all methods use the same sampling budget as our Bayes risk estimator with \(M=20\).

\section{Connection to Existing Uncertainty Quantification Measures}
\subsection{Relating Wasserstein Distance to Decision-Theoretic Uncertainty}
\label{app:wasserstein_uncertainty}
{\citet{smith2025rethinkingaleatoricepistemicuncertainty} introduce a Bayesian decision-theoretic framework for decomposing predictive uncertainty into reducible and irreducible components. To relate this framework to our proposed quantities, let
\[
R_p^{\mathrm{Bayes}} := \mathbb{E}_{\ell\sim p}\big[ d(\ell,\ell_p^*) \big], 
\qquad 
R_q^{\mathrm{Bayes}} := \mathbb{E}_{\ell\sim q}\big[ d(\ell,\ell_q^*) \big],
\]
where \(p\) is the model distribution and \(q\) is the ground-truth distribution. We assume \(q\) to be perfectly approximated by the model as \(n \to \infty\), so i.e. \(q=p_{\infty}\).
Then according to \citet{smith2025rethinkingaleatoricepistemicuncertainty}, \emph{irreducible uncertainty (AU)} is defined as
\[
\mathrm{AU} := R_q^{\mathrm{Bayes}}.
\]
This is the minimum Bayes risk under the true distribution \(q\). The \emph{total uncertainty (TU)} is given by
\[
\mathrm{TU} := R_p^{\mathrm{Bayes}},
\]
corresponding to the minimum Bayes risk under the learned model distribution \(p\). Finally, the \emph{reducible uncertainty (EU)} is defined as
\[
\mathrm{EU} := \mathrm{TU} - \mathrm{AU}
= R_p^{\mathrm{Bayes}} - R_q^{\mathrm{Bayes}}.
\]
It may assume negative values. We now relate these quantities to our uncertainty measure \(W_1(p,q)\) and establish the following connection.}

\begin{restatable}{theorem}{bayesriskstability}
{Let $p$ be the model distribution and $q$ the true distribution on a metric space $(\mathcal{L},d)$. 
Let 
\[
R_p^{\mathrm{Bayes}} := \mathbb{E}_{\ell\sim p}\big[ d(\ell,\ell_p^*) \big], 
\qquad 
R_q^{\mathrm{Bayes}} := \mathbb{E}_{\ell\sim q}\big[ d(\ell,\ell_q^*) \big],
\]
where $\ell_p^*$ and $\ell_q^*$ denote Bayes–optimal actions under $p$ and $q$, respectively.  
Then
\[
\big|\, R_p^{\mathrm{Bayes}} - R_q^{\mathrm{Bayes}} \,\big|
\;\le\;
W_1(p,q).
\]
}
\end{restatable}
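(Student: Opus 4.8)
The plan is to prove the two one-sided inequalities $R_p^{\mathrm{Bayes}} - R_q^{\mathrm{Bayes}} \le W_1(p,q)$ and $R_q^{\mathrm{Bayes}} - R_p^{\mathrm{Bayes}} \le W_1(p,q)$ separately, using the sub-optimality characterization of the Bayes action together with a coupling argument. The key observation is that $R_p^{\mathrm{Bayes}} = \min_{\tilde\ell} \mathbb{E}_{\ell \sim p}[d(\tilde\ell,\ell)]$, so for \emph{any} fixed action $\tilde\ell$ — in particular the Bayes action $\ell_q^*$ of the other distribution — we have $R_p^{\mathrm{Bayes}} \le \mathbb{E}_{\ell \sim p}[d(\ell_q^*, \ell)]$. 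The task then reduces to bounding $\mathbb{E}_{\ell \sim p}[d(\ell_q^*,\ell)] - \mathbb{E}_{\ell \sim q}[d(\ell_q^*,\ell)]$ by $W_1(p,q)$.

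First I would fix an arbitrary coupling $\pi \in \Pi(p,q)$ and write, using that the marginals of $\pi$ are $p$ and $q$,
\[
\mathbb{E}_{\ell \sim p}[d(\ell_q^*,\ell)] - \mathbb{E}_{\ell' \sim q}[d(\ell_q^*,\ell')]
= \mathbb{E}_{(\ell,\ell') \sim \pi}\big[ d(\ell_q^*,\ell) - d(\ell_q^*,\ell') \big].
\]
Since $d$ is a proper metric, the reverse triangle inequality gives $d(\ell_q^*,\ell) - d(\ell_q^*,\ell') \le d(\ell,\ell')$ pointwise, so the right-hand side is at most $\mathbb{E}_{(\ell,\ell') \sim \pi}[d(\ell,\ell')]$. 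Combining with the sub-optimality bound,
\[
R_p^{\mathrm{Bayes}} - R_q^{\mathrm{Bayes}}
\le \mathbb{E}_{\ell \sim p}[d(\ell_q^*,\ell)] - \mathbb{E}_{\ell' \sim q}[d(\ell_q^*,\ell')]
\le \mathbb{E}_{(\ell,\ell') \sim \pi}[d(\ell,\ell')].
\]
Taking the infimum over all couplings $\pi \in \Pi(p,q)$ yields $R_p^{\mathrm{Bayes}} - R_q^{\mathrm{Bayes}} \le W_1(p,q)$. The symmetric argument — bounding $R_q^{\mathrm{Bayes}} \le \mathbb{E}_{\ell' \sim q}[d(\ell_p^*,\ell')]$ and using $W_1(p,q) = W_1(q,p)$ — gives the other direction, and together they produce the absolute-value bound.

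The main obstacle is essentially a bookkeeping one rather than a deep one: one must be careful that the reverse triangle inequality is applied in the correct direction and that the sub-optimality inequality uses the \emph{other} distribution's Bayes action (not its own), since $R_p^{\mathrm{Bayes}} = \mathbb{E}_{\ell\sim p}[d(\ell_p^*,\ell)] \le \mathbb{E}_{\ell\sim p}[d(\ell_q^*,\ell)]$ is the step that makes the coupling argument go through. A secondary technical point worth a line of justification is that the metric structure of $d$ is genuinely used here (unlike the lower-bound result of \Cref{eq:riskbound}, which held even for non-metric $d$): the reverse triangle inequality is what converts the difference of distances into a transport cost, so the hypothesis that $d$ is a proper metric on $\mathcal{L}$ cannot be dropped. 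One should also note measurability/integrability is automatic since all quantities involved are finite under the standing assumptions.
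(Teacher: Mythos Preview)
Your proof is correct and follows essentially the same approach as the paper: a coupling argument combined with the reverse triangle inequality, followed by the sub-optimality bound $R_p^{\mathrm{Bayes}}\le R_p(\ell_q^*)$. The only cosmetic difference is that the paper first proves the intermediate statement $|R_p(\tilde\ell)-R_q(\tilde\ell)|\le W_1(p,q)$ for an arbitrary fixed action $\tilde\ell$ and then specializes to $\tilde\ell=\ell_q^*$, whereas you plug in $\ell_q^*$ from the outset; the underlying ingredients and logical flow are identical.
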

{
Under the interpretation of \citet{smith2025rethinkingaleatoricepistemicuncertainty}, this result yields:
\[
\big|R_p^{\mathrm{Bayes}} - R_q^{\mathrm{Bayes}}\big|
= \big|\mathrm{TU} - \mathrm{AU}\big|
= \big|\mathrm{EU}\big|
\;\le\; W_1(p,q),
\]
showing that the 1-Wasserstein distance upper-bounds epistemic uncertainty under the framework of \citet{smith2025rethinkingaleatoricepistemicuncertainty} and provides a natural geometric interpretation.}

{
Importantly, this bound reveals an inherent asymmetry. While a small distributional distance implies low epistemic uncertainty—reflecting that the model closely matches the true distribution—a large distance does not necessarily entail high epistemic uncertainty. In particular, epistemic uncertainty may remain low when the model is \emph{confidently wrong}. When the predictive distribution \(p\) is highly concentrated but centered at a location different from that of \(q\). In such cases, the model exhibits strong confidence despite substantial distributional mismatch.
}
\subsection{Connection to Bayesian Uncertainty Decomposition}
\label{app:information_theoretic}

\rebuttal{
We briefly relate our Bayes-risk view of uncertainty to the classical Bayesian decomposition of predictive uncertainty. In Bayesian uncertainty quantification, one assumes a posterior distribution \(Q\) over model parameters \(\Theta\). The predictive distribution is obtained by marginalizing over this posterior,
\(
    p(y)
    =
    \mathbb{E}_{\Theta \sim Q}\bigl[p(y \mid \Theta)\bigr].
\)
The (discrete) predictive entropy admits the standard decomposition
\citep{depeweg2018decompositionuncertaintybayesiandeep}
\[
    \underbrace{H(Y)}_{\text{TU}}
    =
    \underbrace{\mathbb{E}_{\Theta \sim Q}
    \bigl[H(Y \mid \Theta)\bigr]}_{\text{AU}}
    +
    \underbrace{I(Y;\Theta)}_{\text{EU}} .
\]
Equivalently, the epistemic term can be written as
\[
    I(Y;\Theta)
    =
    \mathbb{E}_{\Theta \sim Q}
    \Bigl[
        \mathrm{KL}\bigl(
            p(Y \mid \Theta)
            \,\|\, 
            p(Y)
        \bigr)
    \Bigr].
\]
It therefore measures the dispersion of posterior predictive distributions around their marginal prediction. We now show how ur framework recovers this epistemic term as a special case. To that end, let the latent space itself be the probability simplex,
\(
    \mathcal{L} = \Delta^{K-1},
\)
such that each latent object \(\ell \in \mathcal{L}\) is a predictive distribution over \(K\) outcomes. Using the KL divergence as the loss $d_T$ between latent objects, the corresponding Minimum Bayes Risk is exactly the Mutual Information \Cref{lem:simplexkl}.
Thus, the Mutual-Information-based definition of epistemic uncertainty arises as the Bayes risk of a second-order distribution over predictive distributions when the loss is instantiated with the KL divergence. Importantly, when choosing \(\mathcal{L}\) as the probability simplex with KL divergence as distance measure, we do not make any assumptions about the structure of the underlying $K$ classes. Specifically, we also make no assumptions about the similarity or distance of each of the $K$ categorical elements. KL is sensitive to changes in probability mass, but it does not encode any geometry or task-specific distance between the underlying  $K$ outcomes. Hence, two posterior predictive distributions may be assigned a high KL-divergence-based distance even if their respective masses are dispersed between outcomes that are similar under some loss function over these outcomes.
}

\rebuttal{
To carry the idea of task-aware distances to this setting of having the latent space be the second-order space of probability distributions, one could replace the KL-divergence-based dispersion term by a discrepancy between predictive distributions that respects the geometry of the first-order space. One natural example would be the Wasserstein distance. Using the Wasserstein distance induced by some distance over outcomes preserves the Bayesian interpretation of epistemic uncertainty as disagreement among posterior predictive distributions while making this disagreement aware to the task's geometry. Defining uncertainty according to second-order and first-order risk analogously to the information-based definition inherits its  strengths and weaknesses \citep{wimmer2023quantifyingaleatoricepistemicuncertainty}. We leave a full formal treatment of such second-order and geometry-aware uncertainty decompositions to future work.
}

\section{Proofs}
\label{app:proofs}
\subsection{Proof of Lemma~\ref{lem:exactmatch_bayes}}
\label{app:exactmatch_bayes}

\ExactMatchBayes*
\begin{proof}
Fix any prediction \(\hat y \in [K]\). By definition,
\[
R_p(\hat y)
:= \mathbb{E}_{y\sim p}\big[\mathbf{1}\{\hat y \neq y\}\big]
= \Pr(\hat y \neq y)
= 1 - \Pr(y=\hat y)
= 1 - p_{\hat y}.
\]
Hence, minimizing \(R_p(\hat y)\) over \(\hat y \in [K]\) is equivalent to maximizing \(p_{\hat y}\).
Therefore any maximizer \(y_p^ \in \arg\max_{k\in[K]} p_k\) is Bayes-optimal, and its risk equals
\[
R_p(y_p^*) = 1 - p_{y_p^*} = 1 - \max_{k\in[K]} p_k,
\]
which completes the proof.
\end{proof}

\subsection{Proof of Lemma~\ref{lem:hamming_bayes}}
\label{app:hamming_bayes}

\HammingBayes*

\begin{proof}
By linearity of expectation, the expected Hamming loss of a prediction
\(\tilde z \in \{0,1\}^{|Y|}\) decomposes across coordinates:
\[
\mathbb{E}_{z}[\ell(\tilde z, z)]
=
\sum_{i=1}^{|Y|}
\mathbb{E}_{z}\!\left[\mathbf{1}\{\tilde z_i \neq z_i\}\right]
=
\sum_{i=1}^{|Y|}
\Pr(\tilde z_i \neq z_i).
\]

Thus, minimizing the expected loss reduces to minimizing each coordinate independently.
Fix an index \(i\).
If \(\tilde z_i = 1\), the expected loss is \(\Pr(z_i = 0)\);
if \(\tilde z_i = 0\), the expected loss is \(\Pr(z_i = 1)\).
Hence, the optimal choice is
\[
\tilde z_i =
\begin{cases}
1, & \text{if } \Pr(z_i = 1) \ge \Pr(z_i = 0),\\
0, & \text{otherwise,}
\end{cases}
\]
which is equivalent to the stated threshold rule.

Substituting this Bayes-optimal choice back into the risk expression yields
\[
R(z^{\mathrm{Bayes}})
=
\sum_{i=1}^{|Y|}
\min\!\big\{\Pr(z_i = 1), \Pr(z_i = 0)\big\}.
\]
Since each term is at most \(1/2\), we obtain
\(
R(z^{\mathrm{Bayes}}) \le |Y|/2
\),
which completes the proof.
\end{proof}

\subsection{Proof of Lemma~\ref{lem:semantic_bayes}}
\label{app:semantic_bayes}

\SemanticBayes*
\begin{proof}
We start from the definition of the Bayes action under cosine distance on the unit sphere:
\begin{align*}
\ell_{Bayes}
&:= \arg\min_{\ell^\prime \in \mathcal{S}} \mathbb{E}_{\ell \sim p}\big[1-\langle \ell, \ell^\prime\rangle\big] \\
&= \arg\max_{\ell^\prime \in \mathcal{S}} \mathbb{E}_{\ell \sim p}\big[\langle \ell, \ell^\prime\rangle\big] \\
&= \arg\max_{\ell^\prime \in \mathcal{S}} \left\langle \mathbb{E}_{\ell \sim p}[\ell],\, \ell^\prime\right\rangle .
\end{align*}
Let \(\mu := \mathbb{E}_{\ell \sim p}[\ell]\). Then the objective is \(\langle \mu, \ell^\prime\rangle\).
Writing \(\mu = \|\mu\|\,\hat\mu\) with \(\hat\mu := \mu/\|\mu\| \in \mathcal{S}\), we obtain
\[
\langle \mu, \ell^\prime\rangle = \|\mu\|\,\langle \hat\mu, \ell^\prime\rangle.
\]
Since \(\hat\mu \in \mathcal{S}\) and \(\ell^\prime \in \mathcal{S}\), their inner product satisfies
\(\langle \hat\mu, \ell^\prime\rangle \le 1\), with equality achieved when \(\ell^\prime=\hat\mu\).
Hence the maximizer is \(\ell_{Bayes} = \hat\mu = \mu/\|\mu\|\).

For the Bayes risk, we plug \(s_p^*\) into the definition:
\begin{align*}
R_p(\ell_{Bayes})
&:= \mathbb{E}_{\ell \sim p}\big[1-\langle \ell, \ell_{Bayes}\rangle\big] \\
&= 1 - \left\langle \mathbb{E}_{\ell \sim p}[\ell],\, \ell_{Bayes}\right\rangle \\
&= 1 - \left\langle \mu,\, \frac{\mu}{\|\mu\|}\right\rangle \\
&= 1 - \|\mu\|.
\end{align*}
This completes the proof.
\end{proof}

\subsection{Proof of Lemma~\ref{lem:simplexkl}}
\label{app:simplexkl}

\ProbabilitySimplexKL*
\begin{proof}
Expanding the KL divergence,
\[
\mathbb{E}_{\ell}\!\left[d_T(\ell,\ell^\prime)\right]
=
\mathbb{E}_{\ell}\!\left[\sum_{k} \ell_k \log \ell_k\right]
-\sum_{k} \mathbb{E}_{\ell}[\ell_k]\log \ell_k^\prime.
\]
The first term is constant in \(\ell^\prime\). Thus, the problem reduces to
\[
\ell^{\mathrm{Bayes}}
= \arg\min_{\ell^\prime\in\Delta^{K-1}}
-\sum_{k=1}^K \mathbb{E}_{\ell}[\ell_k]\log \ell_k^\prime.
\]
Introducing a Lagrange multiplier \(\lambda\) for
\(\sum_k \ell_k^\prime=1\), we consider
\[
\mathcal{L}(\ell^\prime,\lambda)
=
-\sum_{k} \mathbb{E}_{\ell}[\ell_k]\log \ell_k^\prime
+\lambda\Big(\sum_k \ell_k^\prime-1\Big).
\]
Setting derivatives to zero gives
\[
\ell_k^\prime=\frac{\mathbb{E}_{\ell}[\ell_k]}{\lambda}.
\]
Enforcing the simplex constraint yields \(\lambda=1\), and therefore
\[
\ell_k^{\mathrm{Bayes}}=\mathbb{E}_{\ell}[\ell_k].
\]

For the minimum Bayes risk, plug \(\ell^{\mathrm{Bayes}}\) into the expected KL:
\begin{align*}
R(\ell^{\mathrm{Bayes}})
&:=\mathbb{E}_{\ell}\!\left[d_T(\ell,\ell^{\mathrm{Bayes}})\right] \\
&=\mathbb{E}_{\ell}\!\left[\sum_{k=1}^K \ell_k \log \ell_k\right]
-\sum_{k=1}^K \mathbb{E}_{\ell}[\ell_k]\log \ell_k^{\mathrm{Bayes}} \\
&= -\mathbb{E}_{\ell}[\mathbb{H}(\ell)]
-\sum_{k=1}^K \mathbb{E}_{\ell}[\ell_k]\log \mathbb{E}_{\ell}[\ell_k] \\
&= \mathbb{H}(\mathbb{E}_{\ell}[\ell])-\mathbb{E}_{\ell}[\mathbb{H}(\ell)]
\end{align*}
where \(\mathbb{H}\) is the Entropy function.
\end{proof}

\bayesriskstability*

\begin{proof}
For any fixed $\tilde{\ell}\in\mathcal{L}$,
\begin{align}
\big| R_p(\tilde{\ell}) - R_q(\tilde{\ell}) \big|
&= \Big|\, \mathbb{E}_{\ell\sim p}[d(\ell,\tilde{\ell})] - \mathbb{E}_{\ell'\sim q}[d(\ell',\tilde{\ell})] \,\Big| \\[4pt]
&= \Big|\, \mathbb{E}_{(\ell,\ell')\sim \gamma}\big[ d(\ell,\tilde{\ell}) - d(\ell',\tilde{\ell}) \big] \,\Big| \\[4pt]
&\le \mathbb{E}_{(\ell,\ell')\sim \gamma}\big[\, | d(\ell,\tilde{\ell}) - d(\ell',\tilde{\ell}) | \,\big] \\[4pt]
&\le \mathbb{E}_{(\ell,\ell')\sim \gamma}\big[ d(\ell,\ell') \big],
\end{align}
where $\gamma$ is any coupling of $(p,q)$ and the last inequality uses the reverse triangle inequality.  
Taking the infimum over all couplings $\gamma$ yields
\[
\big| R_p(\tilde{\ell}) - R_q(\tilde{\ell}) \big| \;\le\; W_1(p,q).
\]

Now observe that
\[
R_p^{\mathrm{Bayes}}
\;\le\; R_p(\tilde{\ell})
\;\le\; W_1(p,q) + R_q(\tilde{\ell}).
\]
Since this holds for all $\tilde{\ell}$, choose $\tilde{\ell}=\ell_q^*$ to obtain
\[
R_p^{\mathrm{Bayes}}
\;\le\; W_1(p,q) + R_q^{\mathrm{Bayes}}.
\]
Reversing the roles of $p$ and $q$ yields the reverse inequality and therefore
\[
\big|\, R_p^{\mathrm{Bayes}} - R_q^{\mathrm{Bayes}} \,\big|
\;\le\; W_1(p,q).
\]
\end{proof}

\section{Prompts}
\label{app:prompts}

\begin{lstlisting}[caption={\(g_T\) Prompt for Set-Case},label={lst:set_prompt_gt}]
You extract explicit, semantically distinct atomic answers from a model answer.

Rules:
- Extract only what the answer explicitly states; no inference.
- Split coordinated lists (commas, and/or, slashes) into separate items.
- Merge synonyms/rephrasings; keep one canonical surface form, preserving key head nouns.
- Remove leading articles/hedges; remove trailing punctuation/parentheticals unless part of the name.
- Ignore explanations/justifications; include quantities only if the quantity itself is the item.
- If no explicit item appears (I don't know, etc.), output: I don't know.
- Output strictly: one item per line, no bullets or extra text.

Follow the output format in the examples.

Example 1
Q: Which oceans border the USA?
A: The Atlantic and Pacific Oceans.
Output:
Atlantic Ocean
Pacific Ocean

Example 2
Q: What are the official languages of Switzerland?
A: German, French, Italian, and Romansh are the four official languages.
Output:
German
French
Italian
Romansh



Q: {question}
A: {answer}

Task: List all explicit, distinct answers from the model answer, one per line.
\end{lstlisting}

\begin{lstlisting}[caption={\(g_T\) Prompt for Simplex-Case},label={lst:simplex_prompt_gt}]
You extract explicit, semantically distinct atomic answers and their stated probabilities.

Rules:
- Extract only what is explicitly stated; no inference.
- Each atomic answer must have the probability stated in the text.
- Split lists only if items have separate probabilities.
- Merge synonyms/rephrasings into one canonical answer; SUM their probabilities.
- Convert percentages/ratios to decimals in [0,1].
- Do NOT renormalize or modify probabilities.
- If no explicit answer appears, output exactly: {"items":[{"answer":"I don't know","p":1.0}]}
- Output valid JSON only, no extra text.

Schema:
{"items":[{"answer":string,"p":number}]}

Example
Q: Which oceans border the USA?
A: Atlantic (55%
Output:
{{"items":[{{"answer":"Atlantic,"p":0.55}},{{"answer":"Pacific,"p":0.45}}]}}

Q: {question}
A: {answer}

Task: Extract answer probability pairs.
\end{lstlisting}

\begin{lstlisting}[caption={Prompt for WMT19},label={lst:mt_prompt}]
Translate the following sentence into english. Only return the translation without any
other text.
Sentence: {question}
Translation:
\end{lstlisting}

\begin{lstlisting}[caption={Prompt for Single-Answer QA},label={lst:singleqa_prompt}]
{question}. Only provide the answer without explanation.
\end{lstlisting}

\begin{lstlisting}[caption={Prompt for Simplex},label={lst:simplex_prompt}]
Given a question with multiple possible answers provide the individual answers together
with their associated probabilities. Only return the individual answers and
probabilities without explanations and ensure that the probabilities sum to 1.0. Question:
{question}.
\end{lstlisting}

\begin{lstlisting}[caption={Prompt for Summarization},label={lst:summarization_prompt}]
Article: {article}
Summarize the above article in three simple sentences containing the main points. Only give back the summary without newlines or other formatting.
Summary:
\end{lstlisting}

\end{document}